\documentclass[10pt,journal]{IEEEtran}
\IEEEoverridecommandlockouts

\usepackage{amsmath,amssymb,amsthm}
\usepackage{graphicx}
\usepackage{booktabs}
\usepackage{algorithm}
\usepackage{algpseudocode}
\usepackage{multirow}
\usepackage{hyperref}
\usepackage{xcolor}

\newtheorem{proposition}{Proposition}
\newtheorem{assumption}{Assumption}
\newtheorem{definition}{Definition}
\newtheorem{lemma}{Lemma}

\newcommand{\R}{\mathbb{R}}
\newcommand{\E}{\mathbb{E}}
\newcommand{\Prb}{\mathbb{P}}
\newcommand{\bc}{\bar{c}}
\newcommand{\sig}{\sigma}
\newcommand{\Sset}{\mathcal{S}}
\newcommand{\Aset}{\mathcal{A}}
\newcommand{\Pset}{\Pi_{\sig}}

\newif\ifanonymous
\anonymousfalse  

\title{Dijkstra as an Oracle for Online Stochastic Shortest Path
Navigation with Provable Guarantees}

\ifanonymous
\author{Author names omitted for double-anonymous review}
\else
\author{Mansur Arief$^{1}$, Ali Akarma$^{2}$, Ahmad Alfan Alfian Irfan$^{3}$
\thanks{$^{1}$Industrial and Systems Engineering Department, IRC Smart Mobility and Logistics, King Fahd University of Petroleum and Minerals (KFUPM), Saudi Arabia. Email: {\tt\small mansur.arief@kfupm.edu.sa}}
\thanks{$^{2}$Faculty of Computer and Information Systems, Islamic University of Madinah, Saudi Arabia. Email: {\tt\small 443059463@stu.iu.edu.sa}}
\thanks{$^{3}$Information Technology, Universitas Muhammadiyah Yogyakarta, Indonesia. Email: {\tt\small ahmad.alfan.ft23@mail.umy.ac.id}}
\thanks{$^{\dagger}$This work has been submitted to the IEEE for possible publication. Copyright may be transferred without notice, after which this version may no longer be accessible.}
\thanks{$^{*}$The code is available online at \url{https://github.com/ai-vnv/DORASolvers.jl}}
}
\fi

\begin{document}
\maketitle
\thispagestyle{empty}
\pagestyle{empty}

\begin{abstract}
Mobile robots that operate in side by side with humans and critical facilities must reach
their goals at low cost, despite often unknown true traversal costs of the map apriori and imperfect actuation. Planners that solve the underlying stochastic
shortest path problem exactly, such as value iteration, require computation
that grows with the diameter of the map, whereas Dijkstra's algorithm is fast
but is usually considered inexact once transitions are stochastic. This
study shows that Dijkstra's algorithm can remain an exact planning engine under
a condition that is much weaker than the causality condition often invoked in the
literature, namely nonnegativity of a reduced cost defined on
the determinized map. Building on this characterization, 
an online learner DORA (Dijkstra Oracle Reduced-cost Algorithm) is proposed for robot navigation that calls a shortest path oracle a fixed number of times per episode, never estimates a transition kernel, and
adds a logarithmic survival weight when the probability of contact with a
dynamic obstacle must stay within a budget. In the numerical experiments involving
three other benchmarks that cover grid world navigation, directional
drilling, and drone surveillance, the learner matches optimistic value
iteration that is given the true transition kernel while performing 4.5 to
19.3 times less planner work, reduces contacts during learning by a factor of
seventeen relative to determinize and replan, and keeps the contact rate
within budgets that span two orders of magnitude. These results indicate that
shortest path search supports safe and efficient online navigation and path planning tasks.

\ifanonymous
\vspace{2mm}
\noindent\textit{Note to Practitioners}---This work is motivated by mobile
robots that run on a known map, such as warehouse transports and service
robots in hospitals and offices, where the true cost of driving through each
part of the map is learned only by driving through it and where the wheels
sometimes slip. Practitioners today choose between the shortest path routines
that navigation stacks already ship, which ignore the slip, and exact Markov
decision process solvers, whose running time grows with the size of the map
and is therefore hard to budget inside a control loop. We show that the
shortest path routine itself is not the problem. The real challenge is the number on each edge. Once an edge is also charged for where a slip actually
takes the robot, a single shortest path call returns an exactly optimal
policy, defined at every cell, so a displaced robot
keeps acting correctly without replanning. The resulting method, DORA, is a
change to the edge weights, and it can be dropped into an existing stack. It calls the planner a fixed
number of times per cycle whatever the size of the map, which makes timing
predictable, and it needs only a calibrated slip probability instead of a full
motion model. Across benchmarks DORA matched a planner given the true
dynamics while doing much less planning work, and it reduced
contacts during learning by a factor of seventeen, demonstrating its potential as a safe, online navigation solver for mobile robots.
\fi
\end{abstract}

\begin{IEEEkeywords}
Stochastic shortest path, online learning, robot navigation, planning under
uncertainty, warehouse automation.
\end{IEEEkeywords}

\section{Introduction}

Autonomous and service robots often deal with problems involving known maps and goals, and the robot must reach the goal at low cost while
avoiding contact with obstacles and people \cite{marder2010,trautman2010,malathi2025}. Under such a setting, what is not known in advance is how
expensive each part of the map really is. Floor condition, local clutter and
foot traffic all change the effective cost of a route, and they are learned
in real-time only by traversing. The robot must therefore improve its routing policy online
while it continues to operate.

In the literature, we often cast this setting as a stochastic shortest path problem, or SSP \cite{bertsekas1991}.
Key characteristics include actuation being imperfect, so a commanded move sometimes slips and contact with a
dynamic obstacle ends the task. Value iteration and its heuristic search
variants solve such problems to optimality \cite{hansen2001,bonet2003lrtdp},
and recent work has established minimax regret rates for learning them online
\cite{rosenberg2020,cohen2021,tarbouriech2021}. However, the difficulty is usually
computational. Value iteration propagates information one step per sweep, so
the number of sweeps needed grows with the diameter of the graph. A robot that
replans at every cycle incurs this compute cost every cycle.

Meanwhile, Dijkstra's algorithm \cite{dijkstra1959} has the opposite profile. A single pass propagates cost
information across the entire map, and the run time is near linear in the
number of edges. It is also the workhorse of deployed navigation stacks \cite{marder2010,macenski2020}. The
usual objection is that it does not apply to stochastic problems. The classical
condition for a one pass label setting method to be exact is that an optimal
policy is consistently improving, meaning the value strictly decreases along
every transition that has positive probability \cite{vladimirsky2008,
gaspard2023}. As soon as an actuator slips sideways into a more expensive
region, that condition fails.

As shown in later sections, our analysis shows that this objection is stated too strongly. The minimal condition is nonnegativity of a reduced cost. To see this, for each state and action, define
the reduced cost as the state action value of the stochastic problem minus the
value of the state the action was intended to reach. If these quantities are
nonnegative, then running Dijkstra with them as edge weights returns exactly
the optimal value function and exactly the optimal policy. That said, the reduced cost is
nonnegative under a much weaker requirement than causality, because it only
constrains the determinized edge, not the slip outcomes. In our
study the classical causality condition holds at about half of the states with the
slip present, while the reduced costs remain non-negative on state-action pairs up to a slip probability of $0.30$.

This observation suggests an algorithm. The reduced costs are unknown, but they
have a self consistent form: the reduced cost is the learned step cost plus the
expected change in cost to goal caused by slip. We therefore iterate. Given a
current estimate of the cost to goal, we form the reduced cost weights, call
Dijkstra, and use the returned labels to update the estimate. A small number of
damped iterations is enough. The result is the proposed Dijkstra Oracle Reduced-cost Algorithm (DORA). DORA maintains optimistic estimates of the $|\Sset||\Aset|$
traversal costs and never estimates a transition kernel, which would require
$|\Sset|^2|\Aset|$ parameters.

Our contributions are threefold. First, we characterize when the policy class induced by Dijkstra contains an
optimal policy for a stochastic shortest path problem. The condition is
nonnegativity of a reduced cost on the determinized map, and it is strictly
weaker than the causality condition used in the label setting literature.
Second, we introduce DORA, an online learner that reaches this class using a
fixed number of shortest path oracle calls per episode and no transition
model, together with a chance constrained variant that enforces a risk budget
through an additive log survival weight.
Third, we evaluate the method on a warehouse navigation benchmark. DORA matches optimistic value iteration supplied
with the true kernel while doing an order of magnitude less planner work, and
it reduces contacts during learning by a factor of seventeen relative to plain
determinize and replan. The same advantages hold on three further navigation
benchmarks.

The remainder of the paper is organized as follows. Section~\ref{sec:related}
reviews the related work. Section~\ref{sec:problem} states the problem.
Section~\ref{sec:method} develops the reduced cost characterization and the
algorithm. Section~\ref{sec:exp} reports the experiments.
Section~\ref{sec:disc} discusses the findings, and Section~\ref{sec:concl}
concludes.

\section{Related Work}\label{sec:related}

\subsection{Shortest Path Search for Robot Navigation}

Graph search is the standard planning layer in mobile robotics. D* Lite reuses
information across a sequence of searches and returns the same optimal path
that A* \cite{hart1968} would return on the current cost map \cite{koenig2002}. ARA* attaches
an explicit suboptimality factor to every anytime solution \cite{likhachev2003},
and Anytime D* combines the two while preserving the bound
\cite{likhachev2005}. These methods assume deterministic transitions. When
transitions are stochastic the solution is a policy rather than a path, and
LAO* extends heuristic search to solution graphs with loops
\cite{hansen2001}. Labeled real-time dynamic programming adds a convergence
bound that the plain counterpart is lacking \cite{bonet2003lrtdp}.

The idea of using Dijkstra on a stochastic problem is not new. McMahan and
Gordon generalize Dijkstra and Gaussian elimination into a family of exact MDP
solvers that reduce to Dijkstra when the transitions are deterministic
\cite{mcmahan2005gen}. Bounded real time dynamic programming uses a Dijkstra
sweep to build a monotone initial bound \cite{mcmahan2005brtdp}. Topological
value iteration performs label setting at the granularity of strongly connected
components \cite{dai2007,dai2011}. Bertsekas develops a Dijkstra like algorithm
for robust shortest paths under a semicontractive model \cite{bertsekas2019}.
The exactness conditions for one pass label setting on stochastic problems were
established by Vladimirsky, who requires a consistently improving optimal
policy \cite{vladimirsky2008}, and sharpened by Gaspard and Vladimirsky, who
give explicit cost conditions and an explicit counterexample when the condition
is violated \cite{gaspard2023}. Our contribution is to show that a weaker
condition, stated on reduced costs, rather than on raw values, is what governs
whether Dijkstra-derived policies are lossless.

\subsection{Online Learning for Goal Oriented Problems}

Regret guarantees for stochastic shortest path learning have advanced quickly.
UC-SSP gave the first no regret algorithm without restrictive assumptions
\cite{tarbouriech2020}, near optimal bounds in terms of the optimal cost to go
followed \cite{rosenberg2020}, and the minimax rate was settled by two
concurrent works \cite{cohen2021,tarbouriech2021}. Policy optimization for this
setting was initiated by Chen, Luo and Rosenberg \cite{chen2022po}, and Chen et
al. \cite{chen2023} proved that horizon free regret is impossible under general costs. All of these algorithms plan by value iteration or by
optimization over occupancy measures, and none treats a combinatorial planner
as a black box oracle. Oracle-based learners are standard in online
combinatorial optimization, where following the perturbed leader calls an offline
solver on perturbed costs \cite{kalai2005}, but that line assumes deterministic
edges. We place a shortest path oracle inside a learner for a stochastic
shortest path problem.

\subsection{Determinization and the Price of Structure}

Replanning with a determinized model is a strong practical baseline in
probabilistic planning \cite{yoon2007}. Little and Thi\'ebaux \cite{little2007} showed by
construction that the policy induced by the most likely trajectory can be
exponentially worse than optimal, and later work on reduced
models states plainly that little can be guaranteed about the quality of plans
produced by replanners \cite{pineda2019}. Hansen \cite{hansen2011} gives an a posteriori
suboptimality bound for stochastic shortest path in which the expected time to
absorption replaces the discount amplifier. Our experiments
show that this bound is vacuous at realistic slip levels, while the actual gap
of the reduced cost member of the class is zero. The picture that emerges is
that determinization is not intrinsically lossy. What is lossy is weighting the
determinized graph by expected one step cost.

\subsection{Safe Navigation under Uncertainty}

Chance constrained planning allocates a risk budget across constraints and
solves a tightened deterministic problem \cite{ono2008}. RAO* searches belief
states with admissible bounds on both utility and execution risk and returns
optimal policies that satisfy the chance constraint \cite{santana2016}. FIRM
makes edge costs independent in belief space so that the optimal substructure
needed by graph search is restored \cite{agha2014}. In probabilistic
verification, weighting each edge by the negative logarithm of its probability
turns a most probable violating path into a shortest path \cite{han2009}. Our
chance constrained variant uses the same transformation on the planning side.
Related work also shows that a deterministic solver used as a black box
suffices for several risk objectives with only a logarithmic number of calls
\cite{nikolova2010}, and that a budget of uncertainty costs at most $n+1$
nominal solves \cite{bertsimas2003}.

\section{Problem Formulation}\label{sec:problem}

We model navigation as a stochastic shortest path problem
$M = (\Sset, \Aset, P, c, s_0, g)$. The state set $\Sset$ contains the free
cells of a known occupancy map together with an absorbing crash state, and
$\Aset$ contains four motion primitives. The robot starts at $s_0$ and must
reach the goal $g$. Transitions are stochastic because actuation is imperfect.
A commanded action $a$ moves the robot to its intended successor with
probability $1-\varepsilon$ and slips to one of the two lateral cells with
probability $\varepsilon/2$ each. A slip into a shelf stops the robot in place
and incurs a bump cost. Entering a cell that is occupied by a dynamic obstacle
sends the robot to the crash state, which carries a known dead end penalty.

The traversal cost $c(s,a)$ is unknown. At each step the robot observes a noisy
sample of it. Costs are bounded below by $c_{\min} > 0$ and above by
$c_{\max}$. We write $\bc(s,a)$ for the true mean cost. The value of a
stationary policy $\pi$ is the expected cost accumulated before absorption,
truncated at a horizon $H$ with a timeout penalty, and $V^\star$ is the
optimal value. We write 
\begin{equation}
    Q^\star(s,a) = \bc(s,a) + \sum_{s'} P(s'|s,a)
V^\star(s').
\end{equation}

\noindent The map geometry gives a determinized graph. Let $\sig(s,a)$ denote the cell
that action $a$ is intended to reach from $s$, and let $E$ be the set of pairs
for which this is defined. We make the following standing assumptions, which
hold for any connected occupancy map.

\begin{assumption}\label{as:reach}
The goal $g$ is reachable from every state along determinized edges, and
$c(s,a) \in [c_{\min}, c_{\max}]$ with $c_{\min} > 0$.
\end{assumption}

\noindent The learner interacts for $K$ episodes. In episode $k$ it commits to a
stationary policy $\pi_k$, executes it until absorption or timeout, and updates
its estimates. We measure performance by the regret
\begin{equation}
R_K = \sum_{k=1}^{K} \left( J^{\pi_k}(s_0) - V^\star(s_0) \right),
\label{eq:regret}
\end{equation}
by the success, contact and timeout rates of the executed policy, and by the
work performed by the planner per episode. Work is counted in elementary
operations, namely edge scans for a shortest path call and successor
evaluations for a value iteration sweep. We report work rather than wall clock
so that the comparison does not depend on how each planner is implemented.

\section{Method}\label{sec:method}

\subsection{The Dijkstra Policy Class}

Any nonnegative weight vector on the determinized graph induces a policy.

\begin{definition}\label{def:class}
For $w \in \R_{\ge 0}^{E}$ let $d_w$ solve
$d_w(s) = \min_{a} \{ w(s,a) + d_w(\sig(s,a)) \}$ with $d_w(g) = 0$, and let
$\pi_w(s) = \arg\min_a \{ w(s,a) + d_w(\sig(s,a)) \}$. The Dijkstra policy
class is $\Pset = \{ \pi_w : w \in \R_{\ge 0}^{E} \}$.
\end{definition}

\noindent Under Assumption~\ref{as:reach} the labels $d_w$ are the unique solution of
that system and Dijkstra computes them in $O(|E| + |\Sset| \log |\Sset|)$ time.
Note that a single backward call from the goal returns a policy
defined at every state, not a single path, so the robot may be displaced by a
slip and still act greedily without replanning. Furthermore, the class is finite with
cardinality at most $|\Aset|^{|\Sset|}$, so it cannot be searched by
enumeration. The oracle searches it implicitly.

\subsection{Reduced Costs and Exactness}

The natural choice $w = \bc$ is the one often used by determinize and replan
planners \cite{yoon2007}. However, the right choice is a reduced cost defined in Definition~\ref{def:rc} and shown to be exact in Proposition~\ref{prop:exact}.

\begin{definition}\label{def:rc}
The reduced cost of $(s,a) \in E$ is
\begin{equation}
w^\star(s,a) = Q^\star(s,a) - V^\star(\sig(s,a)).
\label{eq:rc}
\end{equation}
\end{definition}

\begin{proposition}[Exact representation]\label{prop:exact}
Suppose Assumption~\ref{as:reach} holds and $w^\star(s,a) \ge 0$ for all
$(s,a) \in E$. Then $d_{w^\star} = V^\star$ and $\pi_{w^\star}$ is an optimal
policy for $M$. In particular, $\Pset$ contains an optimal policy and
Dijkstra recovers it in one call.
\end{proposition}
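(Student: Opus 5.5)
The plan is to show that $V^\star$ solves the label system of Definition~\ref{def:class} with $w=w^\star$, and then identify it with $d_{w^\star}$. \textbf{Step 1: fixed point.} Use the Bellman optimality equation $V^\star(s)=\min_a Q^\star(s,a)$ for $s\neq g$, with $V^\star(g)=0$. By Definition~\ref{def:rc}, $Q^\star(s,a)=w^\star(s,a)+V^\star(\sig(s,a))$ for every $(s,a)\in E$. This gives
\begin{equation*}
V^\star(s)=\min_{a}\bigl\{w^\star(s,a)+V^\star(\sig(s,a))\bigr\},\qquad V^\star(g)=0 .
\end{equation*}
This uses that the actions available at $s$ are exactly those with $(s,a)\in E$. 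If some primitives have no intended successor, for example because they point into a shelf, I will argue they are dominated or simply excluded from the action set. By hypothesis $w^\star\in\R^{E}_{\ge 0}$, so $w^\star$ is an admissible weight vector and $d_{w^\star}$ is defined.

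\textbf{Step 2: identification.} The paper asserts after Definition~\ref{def:class} that, under Assumption~\ref{as:reach}, $d_w$ is the unique solution of the label system. If that is taken as given, Step 1 yields $d_{w^\star}=V^\star$ at once. I would still check what Dijkstra actually returns, namely the shortest determinized path distance. Along any determinized path $s=s_0,s_1,\dots,s_m=g$, iterating $V^\star(s_i)\le w^\star(s_i,a_i)+V^\star(s_{i+1})$ gives $V^\star(s)\le\sum_i w^\star(s_i,a_i)$. Assumption~\ref{as:reach} guarantees such a path exists, so $V^\star\le d_{w^\star}$.

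For the reverse inequality I would induct on states in increasing order of $V^\star$, mimicking Dijkstra's label-setting invariant. Fix $s$ and an optimal action $a$, so $V^\star(s)=w^\star(s,a)+V^\star(\sig(s,a))$. If $w^\star(s,a)>0$, then $\sig(s,a)$ has strictly smaller value, the induction hypothesis gives $d_{w^\star}(\sig(s,a))\le V^\star(\sig(s,a))$, and hence $d_{w^\star}(s)\le V^\star(s)$.

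\textbf{Step 3: optimality of the policy.} Once $d_{w^\star}=V^\star$, we have $\pi_{w^\star}(s)\in\arg\min_a\{w^\star(s,a)+V^\star(\sig(s,a))\}=\arg\min_a Q^\star(s,a)$, so $\pi_{w^\star}$ is greedy with respect to $V^\star$. In an SSP where all costs are at least $c_{\min}>0$ and the goal is reachable, a policy greedy with respect to $V^\star$ is proper and optimal: a policy that failed to terminate would accumulate unbounded cost, contradicting the finite value $V^\star$. With the horizon-$H$ truncation used in the paper, optimality holds by construction, since $V^\star$ is defined for that truncated problem. Membership $\pi_{w^\star}\in\Pset$ is immediate, and one backward Dijkstra call from $g$ computes $d_{w^\star}$ and $\pi_{w^\star}$.

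\textbf{Main obstacle.} The hard case is zero reduced costs, where $V^\star(\sig(s,a))=V^\star(s)$. A deterministic label system with zero-weight cycles can have non-unique fixed points: two states joined by weight-$0$ edges admit a continuum of solutions. So the uniqueness claim needs care, and the induction in Step 2 has no strict decrease to work with. My first attempt is to take the tied set $T=\{s': V^\star(s')=V^\star(s)\}$ and show that some state of $T$ has an optimal action with $w^\star>0$ leaving $T$. I would derive this from $c_{\min}>0$ and the goal-reachability of $\pi^\star$ via an averaging argument over the slip outcomes. Then I would propagate through the zero edges inside $T$, which reproduces Dijkstra's tie handling. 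If that fails, the fallback is a perturbation: replace $w^\star$ by $w^\star+\delta$ with $\delta>0$, note that $d_{w}$ is continuous as $\delta\downarrow 0$, and identify the limit with $V^\star$ through the greedy policy of Step 3.
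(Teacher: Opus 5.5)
The gap is the zero-reduced-cost case. Neither repair you sketch can close it, because under Assumption~\ref{as:reach} and $w^\star\ge 0$ alone the statement is false. The rest is sound: Steps 1 and 3 and the lower bound $V^\star\le d_{w^\star}$ in Step 2 (exactly the paper's Lemma~\ref{lem:lower}), and your induction for states whose optimal action has $w^\star>0$ (the content of Lemma~\ref{lem:upper}).

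\textbf{Counterexample.} Take states $s_1,s_2,g$ and two actions:
\begin{itemize}
\item action $a$, with $\sig(s_1,a)=s_2$ and $\sig(s_2,a)=s_1$, cost $1$, reaching the intended cell with probability $1-\varepsilon$ and slipping into $g$ with probability $\varepsilon$;
\item action $b$, with $\sig(s_i,b)=g$, deterministic, cost $C>1/\varepsilon$.
\end{itemize}
Then $V^\star(s_1)=V^\star(s_2)=1/\varepsilon$, $a$ is the unique optimal action, $w^\star(s_i,a)=0$ and $w^\star(s_i,b)=C$, so every hypothesis holds. Nothing in the hypotheses, or in your averaging argument, excludes this transition structure. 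Yet every determinized route to $g$ ends with a $b$ edge, so Dijkstra returns $D_{w^\star}(s_i)=C>V^\star(s_i)$. Its shortest path tree must also use $b$ at one of the two states, which is suboptimal there.

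\textbf{Why your two repairs fail here.} Your averaging argument cannot produce an exit from the tied set $T=\{s_1,s_2\}$. Combining $c_{\min}>0$ with $w^\star(s,a)=0$ only forces the \emph{slip} outcomes below $V^\star(s)$ on average, never the intended successor. The perturbation gives $D_{w^\star+\delta}\to D_{w^\star}=C$, which is the largest solution of the label system, not $V^\star$. Identifying the limit ``through Step 3'' is circular, since Step 3 presupposes $d_{w^\star}=V^\star$.

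\textbf{The exact criterion and how the paper closes the gap.} The telescoping in Lemma~\ref{lem:lower} gives, with the minimum taken over determinized paths $s=s_1,\dots,s_{m+1}=g$,
\[
D_{w^\star}(s)=V^\star(s)+\min\sum_{i}\bigl(Q^\star(s_i,a_i)-V^\star(s_i)\bigr).
\]
So $D_{w^\star}=V^\star$ holds iff every state reaches $g$ in the determinized graph using only optimal actions. That is an extra hypothesis, not a consequence of $c_{\min}>0$ and reachability. The paper's appendix proof supplies it through Assumption~\ref{as:progress} (strict determinized progress, $V^\star(s)>V^\star(\sig(s,\pi^\star(s)))$), which it explicitly flags as going beyond the main-text hypotheses. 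Under that assumption $w^\star(s,\pi^\star(s))>0$ at every non-terminal state, and your strict-decrease induction finishes the proof with no tie handling. You were also right to distrust the uniqueness remark after Definition~\ref{def:class}. In the example the label system is solved by $d(s_1)=d(s_2)=t$ for every $t\le C$, which is why the one-line main-text proof does not stand on its own.
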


\begin{proof}
By the Bellman equation and \eqref{eq:rc},
\[
\min_a \{ w^\star(s,a) + V^\star(\sig(s,a)) \} = \min_a Q^\star(s,a)
= V^\star(s),
\]
and $V^\star(g) = 0$. So $V^\star$ solves the determinized fixed point system
of Definition~\ref{def:class} with weights $w^\star$. Under
Assumption~\ref{as:reach} and $w^\star \ge 0$ that system has a unique
solution, hence $d_{w^\star} = V^\star$. The greedy action then satisfies
$\arg\min_a \{ w^\star(s,a) + d_{w^\star}(\sig(s,a)) \} =
\arg\min_a Q^\star(s,a)$, which is optimal.
\end{proof}

The condition in Proposition~\ref{prop:exact} is weaker than the condition used
in the label-setting literature \cite{vladimirsky2008,gaspard2023}. A policy is consistently improving when
$V^\star(s) > V^\star(s')$ for every $s'$ with $P(s'|s,\pi^\star(s)) > 0$,
which constrains all slip outcomes \cite{vladimirsky2008,gaspard2023}. We thus have Proposition~\ref{prop:caus}.

\begin{proposition}[Relation to causality]\label{prop:caus}
At the optimal action, $w^\star(s,\pi^\star(s)) = V^\star(s) -
V^\star(\sig(s,\pi^\star(s)))$. Hence nonnegativity of the reduced cost at the
optimal action is exactly the causality condition restricted to the
determinized edge, and it is implied by, but does not imply, consistent
improvement.
\end{proposition}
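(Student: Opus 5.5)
The identity comes straight from Definition~\ref{def:rc} and the Bellman equation. At $a=\pi^\star(s)$ the optimal action attains the minimum, so $Q^\star(s,\pi^\star(s)) = \min_a Q^\star(s,a) = V^\star(s)$. Substituting into \eqref{eq:rc} gives $w^\star(s,\pi^\star(s)) = V^\star(s) - V^\star(\sig(s,\pi^\star(s)))$. Consequently $w^\star(s,\pi^\star(s))\ge 0$ holds exactly when $V^\star(s)\ge V^\star(\sig(s,\pi^\star(s)))$. This is the causality inequality of \cite{vladimirsky2008,gaspard2023}, evaluated only at the single successor $s'=\sig(s,\pi^\star(s))$ instead of at every $s'$ in the support of $P(\cdot|s,\pi^\star(s))$. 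That settles the first sentence of the proposition.

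For the implication, assume consistent improvement holds. Under the slip model of Section~\ref{sec:problem}, the intended successor is reached with probability $1-\varepsilon>0$ whenever $\varepsilon<1$, so $P(\sig(s,\pi^\star(s))\mid s,\pi^\star(s))>0$. Consistent improvement then gives the strict inequality $V^\star(s) > V^\star(\sig(s,\pi^\star(s)))$, and by the identity above $w^\star(s,\pi^\star(s))>0\ge 0$. I would state the requirement $\varepsilon<1$ explicitly, or more generally that the intended successor carries positive probability under the optimal action, since this is the only model assumption the argument uses.

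For the non-implication, I would give a small explicit instance. The simplest one is a corridor $s_0\to s_1\to g$, where a lateral slip from $s_1$ lands in a cell $u$ that is expensive but can still reach $g$. I would choose the costs so that $V^\star(u) > V^\star(s_1)$, while $V^\star(s_1)$ is still below $V^\star(s_0)$ along the intended edge. At $s_1$ the intended edge goes to $g$ with $V^\star(g)=0$, so its reduced cost is $V^\star(s_1)\ge 0$. The slip outcome $u$, however, violates $V^\star(s_1)>V^\star(u)$, so consistent improvement fails. Every intended edge in the instance still has nonnegative reduced cost.

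The main obstacle is keeping the counterexample's values consistent. The cost of reaching $u$ feeds back into $V^\star(s_1)$ through the Bellman equation, so the numbers must make $V^\star(u)>V^\star(s_1)$ hold after that feedback is taken into account. This is where I would spend effort. First I would solve the three or four linear Bellman equations in closed form for a generic slip probability $\varepsilon$. From $u$ the only exit is a route of cost at least some large $C$ that never returns to $s_1$, so $V^\star(u)\approx C$, while $V^\star(s_1)\approx \bc + \varepsilon C/2$. For $\varepsilon<1$ and $C$ large this gives $V^\star(u)>V^\star(s_1)$. As a fallback I would point to the paper's empirical observation that causality fails at about half the states while reduced costs stay nonnegative, but the formal claim needs the explicit instance.
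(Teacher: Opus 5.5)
Your identity and forward implication are the paper's proof: substitute $Q^\star(s,\pi^\star(s))=V^\star(s)$ into \eqref{eq:rc}, then use that the determinized successor is one of the positive-probability outcomes. Making the hypothesis $P(\sig(s,\pi^\star(s))\mid s,\pi^\star(s))>0$ explicit is a genuine tightening, because the paper relies on it without saying so.

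The real difference is the converse. The paper handles it in one line, saying a lateral slip ``may raise the value without affecting $w^\star$''. It then points to Figure~\ref{fig:exp2} to show that such failures are typical rather than merely possible. You instead build an instance and check the Bellman feedback, $V^\star(s_1)\approx\bc+\varepsilon C/2<C\approx V^\star(u)$ for $\varepsilon<1$ and large $C$. That turns the ``does not imply'' clause into an actual existence proof rather than a plausibility remark. The paper's route gains brevity and evidence of prevalence; yours gains rigor.

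Two remarks on your construction. First, realize it on the paper's four-primitive grid. There $u$ can always step back to $s_1$, so $V^\star(u)\approx C$ needs an expensive return move. Your sentence ``every intended edge still has nonnegative reduced cost'' is then false. The action at $s_1$ aimed at $u$ has $g$ as one of its lateral outcomes, so its reduced cost is $\bc(s_1,a)+\varepsilon\bigl(\tfrac12 V^\star(s'')-V^\star(u)\bigr)$, where $s''$ is the other lateral outcome. In your corridor this is negative for large $C$. The proposition only concerns optimal-action edges, so simply drop that sentence.

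Second, the large constant is unnecessary. Make $u$ a dead-end pocket beside $s_1$, walled on its other three sides, and take the four-cell instance $\{s_0,s_1,g,u\}$. With step cost $\bc$ and bump cost $c_{\mathrm{bump}}$, both lateral slips at $u$ bump, so the Bellman equation at $u$ gives $V^\star(u)=V^\star(s_1)+(\bc+\varepsilon c_{\mathrm{bump}})/(1-\varepsilon)>V^\star(s_1)$ directly. Consistent improvement then fails at $s_1$ through $u$, independently of how self-loops are treated. For small $\varepsilon$, every reduced cost in that instance, optimal or not, is positive.
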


\begin{proof}
$Q^\star(s,\pi^\star(s)) = V^\star(s)$ by optimality, and substitution into
\eqref{eq:rc} gives the identity. Consistent improvement requires a strict
decrease on all positive probability successors, of which the determinized
successor is one, so it implies the stated inequality. The converse fails
because a lateral slip may raise the value without affecting $w^\star$.
\end{proof}

This distinction explains the empirical picture reported in
Section~\ref{sec:exp}. Slip breaks causality immediately, because some lateral
outcome always lands in a more expensive region. It does not break
nonnegativity of the reduced cost, because a step still costs at least
$c_{\min}$ and still makes progress toward the goal in the determinized graph.

\subsection{The DORA Algorithm}

The reduced costs are unknown, but \eqref{eq:rc} can be written in a self
consistent form. Substituting the definition of $Q^\star$,
\begin{equation}
w^\star(s,a) = \bc(s,a) + \underbrace{\sum_{s'} P(s'|s,a) V^\star(s')
- V^\star(\sig(s,a))}_{\text{drift}} .
\label{eq:drift}
\end{equation}
The drift term is the expected penalty for not landing where the action
intended. It requires the actuation model, which is a single calibrated scalar
$\varepsilon$ together with the known map geometry, and it requires an estimate
of the cost to goal, which the oracle itself returns.

DORA therefore alternates between forming the weights and calling the oracle.
Let $\hat{c}_k$ be the empirical mean cost,
\begin{equation}
b_k(s,a) = \beta \sqrt{ \frac{\log(2|\Sset||\Aset|K/\delta)}{\max(N_k(s,a),1)} }
\label{eq:bonus}
\end{equation}
be a confidence radius, and 
\begin{equation}
    w^{\mathrm{base}}_k = \max(c_{\min}, \hat{c}_k - b_k)
\end{equation} 
be the optimistic step cost. Given a current label vector
$d$, the algorithm forms $\left(w(s,a)\right)^+$ where 
\begin{equation}
w(s,a) = w^{\mathrm{base}}_k(s,a) + \sum_{s'} \hat{P}(s'|s,a) d(s')
- d(\sig(s,a)),
\label{eq:w}
\end{equation}
calls Dijkstra, and updates $d$ by a damped step with parameter $\alpha$. Note here that the
clipping at zero is required because the oracle needs nonnegative weights, and
it is inactive whenever Proposition~\ref{prop:exact} applies. The full
procedure is given in Algorithm~\ref{alg:dora}. At this stage, we have:

\begin{algorithm}[t]
\caption{DORA}
\label{alg:dora}
\begin{algorithmic}[1]
\Require map $\sig$, actuation model $\hat P$, episodes $K$, inner iterations
$I$, damping $\alpha$, radius scale $\beta$, dead end penalty $c_{\mathrm{de}}$
\State $N \gets 0$, $C \gets 0$, $d \gets 0$, $d(\text{crash}) \gets
c_{\mathrm{de}}$
\State build reverse adjacency of $\sig$ once
\For{$k = 1, \dots, K$}
  \State $w^{\mathrm{base}} \gets \max(c_{\min}, C/N - b_k)$
    \Comment{Eq.~\eqref{eq:bonus}}
  \For{$i = 1, \dots, I$}
    \State form $w$ from $w^{\mathrm{base}}$ and $d$
      \Comment{Eq.~\eqref{eq:w}}
    \State $(\pi_k, d^{+}) \gets \textsc{Dijkstra}(\sig, w, g)$
    \State $d \gets (1-\alpha) d + \alpha d^{+}$;\;
      $d(\text{crash}) \gets c_{\mathrm{de}}$
  \EndFor
  \State execute $\pi_k$ for one episode
  \State update $N$ and $C$ from the observed step costs
\EndFor
\State \Return $\pi_K$
\end{algorithmic}
\end{algorithm}

\begin{proposition}[Fixed point]\label{prop:fp}
Suppose $\hat{c} = \bc$, $\hat{P} = P$, $b \equiv 0$ and $w^\star \ge 0$ on
$E$. Then $d = V^\star$ is a fixed point of the inner loop of
Algorithm~\ref{alg:dora}, and the returned policy is optimal.
\end{proposition}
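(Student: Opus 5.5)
The plan is to substitute $d = V^\star$ into one pass of the inner loop and show that nothing moves. This reduces almost everything to Proposition~\ref{prop:exact}.

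\emph{Step 1: the weights are the reduced costs.} With $\hat c = \bc$ and $b \equiv 0$, the base weight is $w^{\mathrm{base}} = \max(c_{\min}, \bc)$. Assumption~\ref{as:reach} gives $\bc \ge c_{\min}$, so $w^{\mathrm{base}} = \bc$. With $\hat P = P$ and $d = V^\star$, equation \eqref{eq:w} then reads
\[
w(s,a) = \bc(s,a) + \sum_{s'} P(s'|s,a) V^\star(s') - V^\star(\sig(s,a)).
\]
By \eqref{eq:drift} this is exactly $w^\star(s,a)$. Since $w^\star \ge 0$ on $E$ by hypothesis, the clipping $(\cdot)^+$ is inactive, and the oracle receives $w^\star$.

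\emph{Step 2: the oracle returns $V^\star$.} By Proposition~\ref{prop:exact}, Dijkstra with weights $w^\star$ returns labels $d^+ = d_{w^\star} = V^\star$ and a policy $\pi_{w^\star}$ that is optimal. The damped update then gives $(1-\alpha)V^\star + \alpha V^\star = V^\star$.

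\emph{Step 3: the crash reset.} The reset $d(\text{crash}) \gets c_{\mathrm{de}}$ is consistent because the crash state is absorbing with known dead end penalty. I will read $V^\star(\text{crash}) = c_{\mathrm{de}}$ as the convention under which the absorption cost is charged through the value of the crash state. With that reading, $d = V^\star$ is unchanged by every inner iteration. The policy $\pi_k$ returned at each iteration equals $\pi_{w^\star}$, so the returned policy is optimal.

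The step I expect to be the main obstacle is this bookkeeping of the crash state and the horizon truncation. I need $V^\star$ to assign value $c_{\mathrm{de}}$ to the crash state, and to satisfy the Bellman equation used in Proposition~\ref{prop:exact}. I would state this as part of the modeling convention of Section~\ref{sec:problem}, treating crash as a terminal node whose value is fixed rather than computed by the oracle. Tie-breaking in the $\arg\min$ is harmless: any minimizer of $Q^\star(s,\cdot)$ is optimal, as in the proof of Proposition~\ref{prop:exact}.
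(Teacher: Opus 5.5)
Your proof is correct and follows the paper's own argument. Substituting $d = V^\star$ into \eqref{eq:w} and comparing with \eqref{eq:drift} gives $[w^\star]^{+} = w^\star$, Proposition~\ref{prop:exact} gives $d^{+} = V^\star$, and so the damped update is stationary and $\pi_k = \pi_{w^\star}$ is optimal; your check that $\max(c_{\min}, \bc) = \bc$ and your reading of $V^\star(\text{crash}) = c_{\mathrm{de}}$ as a terminal cost are details the paper leaves implicit, and its appendix uses exactly that convention. The real subtlety is not the crash bookkeeping: the appendix justifies $d^{+} = D_{w^\star} = V^\star$ through Lemmas~\ref{lem:lower} and \ref{lem:upper}, where the achievability lemma needs Assumption~\ref{as:progress} so that the determinized path of $\pi^\star$ cannot cycle on zero reduced-cost edges, and by citing Proposition~\ref{prop:exact} you implicitly rely on that extra hypothesis too.
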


\begin{proof}
Setting $d = V^\star$ in \eqref{eq:w} and comparing with \eqref{eq:drift} gives
$w = [w^\star]^{+} = w^\star$. Proposition~\ref{prop:exact} then yields
$d^{+} = d_{w^\star} = V^\star$, so the update leaves $d$ unchanged and
$\pi_k = \pi_{w^\star}$ is optimal.
\end{proof}

Proposition~\ref{prop:fp} states what the planner converges to when the
estimates are correct. However, it does not by itself give a regret bound. Furthermore, because $w^{\mathrm{base}}_k$ is an optimistic estimate
of $\bc$ on the event that all confidence intervals hold, the policy returned
at episode $k$ is greedy with respect to an optimistic model, and the per
episode regret is controlled by the sum of confidence radii along the visited
trajectory. Following the argument used for optimistic algorithms in this
setting \cite{tarbouriech2021,rosenberg2020}, we have a bound of order
$\tilde{O}( c_{\max} \tau_{\max} \sqrt{|\Sset||\Aset|K} )$ against the best
member of $\Pset$, where $\tau_{\max}$ bounds the expected time to absorption.
Even further, under Proposition~\ref{prop:exact} the best member of $\Pset$ is optimal, so
the same bound holds against $V^\star$. We state this as the guarantee that
motivates the design and verify the resulting sublinear behavior empirically in
Section~\ref{sec:exp}. The formal guarantee for the variant whose inner iteration is run to convergence is provided in Appendix~\ref{app:proofs}.

In terms of compute, the cost of Algorithm~\ref{alg:dora} is $I$ oracle calls and $I$ drift
evaluations per episode. The advantage is the number $I$ does not grow with the map. A single
Dijkstra pass already propagates cost information across the whole graph, so
the iteration only has to correct for slip (which is local). On the contrary, value iteration
has no such property. Its number of sweeps grows with the diameter of the
graph, because each sweep moves information one step. This difference is the source of the
scaling behavior reported in Section~\ref{sec:exp1}.

\subsection{Chance-Constrained Navigation}

Finally we address safety, which is often specified as a budget \cite{ono2008,blackmore2011,degroot2025}. Let
$\rho(\pi)$ be the probability that an episode ends in contact, and require
$\rho(\pi) \le \Delta$. Let $\hat{p}(s,a)$ be the empirical contact
probability. For that, we add a log survival term to the oracle weight,
\begin{equation}
w_\lambda(s,a) = w(s,a) + \lambda \left( -\log \left(1 - \hat{p}(s,a)\right)
\right).
\label{eq:risk}
\end{equation}
The weight of a route under the second term alone is the negative logarithm of
the probability of traversing it without contact, so a shortest path under
\eqref{eq:risk} trades distance off against survival. The same transformation
underlies counterexample generation in probabilistic verification
\cite{han2009}. Furthermore, the multiplier is updated by projected dual ascent on the
realized contact indicator of the episode just executed,
\begin{equation}
\lambda_{k+1} = \Big[ \lambda_k + \tfrac{\eta_0}{\sqrt{k}}
\left( \mathbf{1}\{\text{contact}_k\} - \Delta \right) \Big]_{0}^{\lambda_{\max}},
\label{eq:dual}
\end{equation}
which needs no extra planning sweep. Here, $\mathbf{1}\{\cdot\}$ is the indicator function. We call this variant DORA-S. Because the
optimal policy of a constrained Markov decision process is in general a mixture
of deterministic policies \cite{altman1999}, the constraint is met in the time
averaged sense rather than by every individual iterate.

\begin{figure*}[htbp]
\centering
\includegraphics[width=0.72\textwidth]{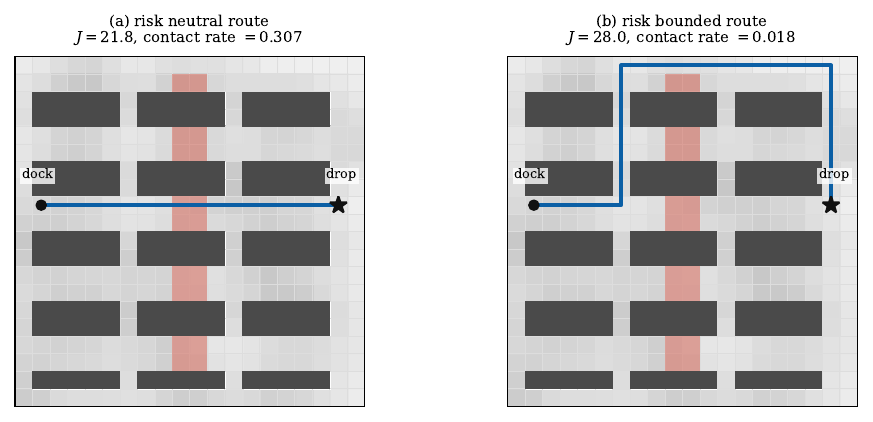}
\caption{The warehouse benchmark. Dark blocks are shelves, the shading of free
cells is the unknown traversal cost, and the red band is a picking zone where
contact ends the episode. (a) The risk neutral route drives through the
picking zone. (b) With a risk budget the same oracle call returns a detour
that almost eliminates contact.}
\label{fig:map}
\end{figure*}

\section{Experiments}\label{sec:exp}

\subsection{Setup}

The benchmark we provide is a $20 \times 20$ warehouse with shelf blocks and four vertical
cross aisles, giving $266$ states after adding the crash state. A human picking
zone spans two interior columns. The robot may drive through it or detour
through the perimeter aisles, which are the only gates left open.
Figure~\ref{fig:map} shows the map and the two routes. Traversal costs combine
a shelf proximity term with a smoothed random field, so that the region a robot
slips into affects its performance and not only the step on which it slipped. Unless stated
otherwise the slip probability is $\varepsilon = 0.10$, the contact probability
in the picking zone is $0.14$, the dead end penalty is $80$, the horizon is
$150$ and the radius scale is $\beta = 0.05$. Every configuration is repeated
over $10$ instances that differ in the random terrain field, and we report the
mean and the standard deviation of the regret, along with the other key metrics, which include Work, SR, CR, and Contacts. Here, work is planner operations per episode. SR and CR are the success and contact
rates of the final policy, respectively. Finally, contacts is the expected number accumulated during learning.

The model, the Dijkstra implementation and all learners are written in Julia.
The oracle is an indexed binary heap Dijkstra with decrease key. We
evaluate each distinct policy exactly by backward recursion (rather than by
sampling), which removes evaluation noise from every reported curve.

We compare the following methods. DORA is Algorithm~\ref{alg:dora} with $I=3$
and $\alpha = 0.4$. DORA-0 is the ablation with the reduced cost correction
switched off, which is optimistic determinize and replan. CED \cite{yoon2007} is the
certainty equivalent Dijkstra planner with neither optimism nor correction. EGD
adds $\varepsilon$ greedy exploration to CED. OVI-U is optimistic value
iteration that estimates the transition kernel \cite{tarbouriech2020,rosenberg2020}, and OVI-K uses the true kernel. Both value iteration baselines are run
to convergence with a warm start at every episode, so they are not penalized by
an arbitrary sweep budget.

\subsection{Online Navigation}\label{sec:exp1}

Table~\ref{tab:exp1} reports the main comparison over
$K = 800$ episodes. We summarize the results also in Figure~\ref{fig:exp1}. Our findings are as follows.

\begin{table}[t]
\centering
\caption{Online navigation metric comparison over $800$ episodes}
\label{tab:exp1}
\footnotesize
\setlength{\tabcolsep}{3.2pt}
\begin{tabular}{lrrrrr}
\toprule
Method & Regret $\downarrow$ & Work $\downarrow$ & SR $\uparrow$ &
CR $\downarrow$ & Contacts $\downarrow$ \\
\midrule
DORA   & $\mathbf{203 \pm 74}$   & $15{,}084$  & $\mathbf{0.984}$ & $\mathbf{0.016}$ & $\mathbf{12.7}$ \\
DORA-0 & $4{,}806 \pm 866$          & $\mathbf{772}$ & $0.727$ & $0.273$ & $219.4$ \\
CED    & $4{,}751 \pm 841$          & $\mathbf{772}$ & $0.727$ & $0.273$ & $218.7$ \\
EGD    & $7{,}753 \pm 740$          & $\mathbf{772}$ & $0.709$ & $0.291$ & $232.6$ \\
OVI-U  & $1{,}906 \pm 1{,}666$         & $196{,}094$ & $0.983$ & $0.017$ & $20.5$ \\
OVI-K  & $203 \pm 47$            & $164{,}801$ & $\mathbf{0.984}$ & $\mathbf{0.016}$ & $\mathbf{12.7}$ \\
\midrule
Optimal & $-$ & $-$ & $0.984$ & $0.016$ & $-$ \\
\bottomrule
\end{tabular}
\end{table}

\begin{figure*}[htbp]
\centering
\includegraphics[width=0.90\textwidth]{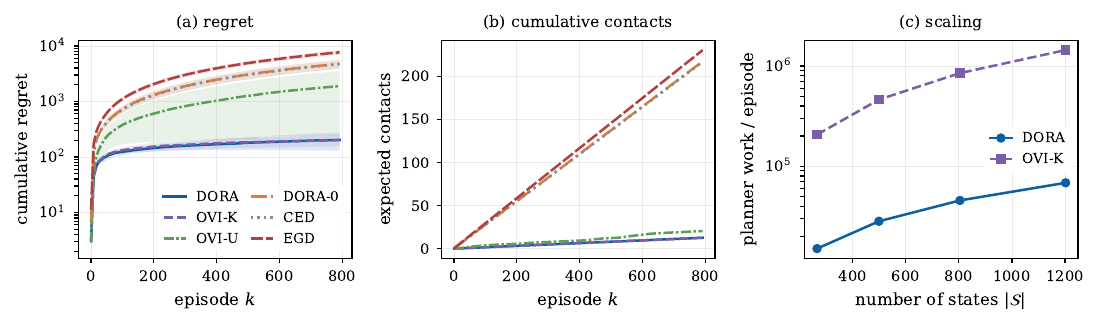}
\caption{Online warehouse navigation results. (a) Cumulative regret, shaded by one standard
deviation over ten instances. (b) Expected contacts accumulated during
learning. (c) Planner work per episode as the map grows.}
\label{fig:exp1}
\end{figure*}

First, we observe that DORA attains a regret of $203$, which is
statistically indistinguishable from the $203$ attained by OVI-K, while
performing $10.9$ times less planner work. It also beats OVI-U, which must
estimate the transition kernel, by a factor of $9.4$ in regret and by $13.0$ in
work. Second, the reduced cost correction is what makes this possible. The
ablation DORA-0 incurs $24$ times more regret and $17$ times more contacts,
and it converges to a policy whose cost exceeds the optimum by $5.8$ units. The
difference is that a determinized planner weighted by expected
one step cost cannot see where a slip takes the robot, and in this map a slip
into the picking zone is what causes contact. Third, safety during learning
follows the same pattern. DORA accumulates $12.7$ expected contacts across the
whole training run, which matches the $12.7$ accumulated by an agent that
knows the true dynamics, whereas the uncorrected planners accumulate more than
$218$.

Panel (c) of Figure~\ref{fig:exp1} reports planner work as the map is enlarged
from $266$ to $1202$ states. The ratio of work between OVI-K and DORA grows
from $13.7$ to $21.1$, further supporting our argument in
Section~\ref{sec:method}. DORA performs a fixed number of oracle calls whatever
the size of the map, whereas the number of value iteration sweeps needed to
converge grows with the diameter.

\subsection{Losslessness of Dijkstra-class Algorithms}\label{sec:exp2}

The second experiment tests our theory. We sweep the slip probability from
$0$ to $0.45$ on $25$ instances with a strongly varying terrain field and no
picking zone, so that the only source of difficulty is actuation noise. For
each instance we compute the exact optimal value, evaluate the two exactness
conditions, and evaluate three members of the Dijkstra class. Results are summarized in Figure~\ref{fig:exp2}.

\begin{figure*}[htbp]
\centering
\includegraphics[width=0.90\textwidth]{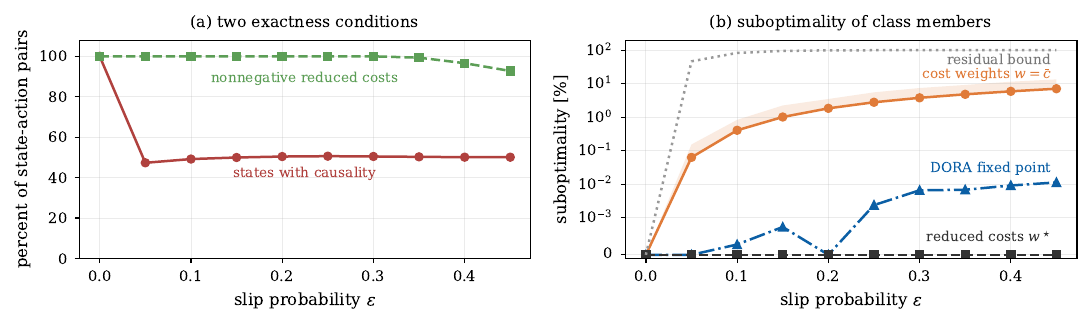}
\caption{Exactness of the Dijkstra policy class. (a) Causality collapses as
soon as slip is introduced, while the reduced costs stay nonnegative. (b) The
reduced cost weights are exactly optimal at every slip level, and the DORA
fixed point stays within $0.011$ percent of the optimum.}
\label{fig:exp2}
\end{figure*}

The classical causality condition collapses immediately. At $\varepsilon = 0$
it holds at every state, and at $\varepsilon = 0.05$ it holds at only $47.3$
percent of them. The reduced cost condition holds
at every state action pair up to a slip of $0.30$, and even at $\varepsilon =
0.45$ it holds at $92.8$ percent of them. Correspondingly, the member of the
class defined by the reduced costs is exactly optimal at every slip level
tested, to the precision of the computation. The DORA fixed point, which
reaches the same weights from data stays within
$0.011$ percent of the optimum.

Finally, we note the contrast with the two standard reference points. The member
of the class weighted by expected one step cost, which is what determinize and
replan uses, loses up to $7.1$ percent, with a ninetieth percentile loss of
$14.5$ percent. The a posteriori residual bound of \cite{hansen2011}, in which
the Bellman residual is amplified by the expected time to absorption, exceeds
the optimal value itself for any slip above $0.15$. The gap between a vacuous
worst case bound and an exact policy is the practical content of
Proposition~\ref{prop:exact}.

\subsection{Chance Constrained Navigation}\label{sec:exp3}

The third experiment enforces a risk budget. We set the contact probability in
the picking zone to $0.16$ and sweep the budget $\Delta$ from $0.30$ down to
$0.02$. DORA-S runs Algorithm~\ref{alg:dora} with the risk weight
\eqref{eq:risk} and the dual update \eqref{eq:dual}. As a reference we compute,
on the true model, the best stationary policy that meets the budget, by
bisecting a Lagrange multiplier on the true contact probability. Results are summarized in Figure~\ref{fig:exp3}.

\begin{figure*}[htbp]
\centering
\includegraphics[width=0.90\textwidth]{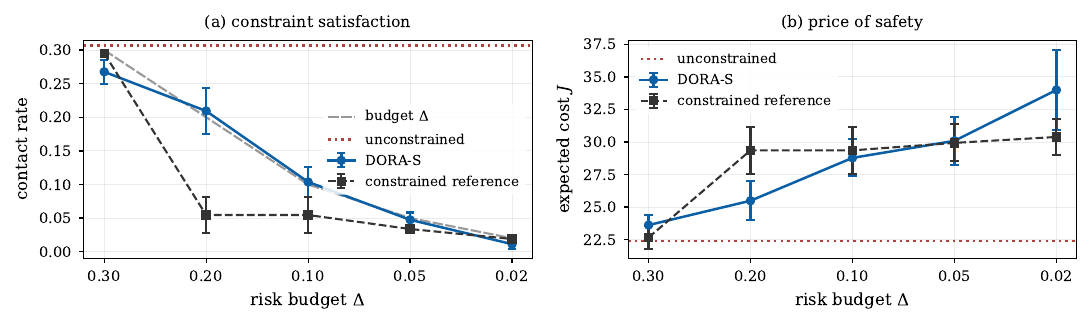}
\caption{Chance constrained navigation. (a) The contact rate of DORA-S tracks
the budget across two orders of magnitude. (b) The cost paid for that safety.
At intermediate budgets DORA-S beats the best deterministic policy because
dual ascent mixes routes across episodes.}
\label{fig:exp3}
\end{figure*}

The mean contact rate tracks the budget across the whole range, from $0.268$ at
a budget of $0.30$ down to $0.011$ at a budget of $0.02$, against an
unconstrained rate of $0.307$. The cost paid for that safety rises from $22.41$
to $33.99$, which is the price of routing around the picking zone rather than
through it. In addition, we also note two caveats. First, satisfaction is in the time
averaged sense. At the intermediate budgets the mean satisfies the constraint
but individual instances could violate it, which is expected because dual ascent
oscillates around the boundary of the feasible set. Second, at budgets of
$0.20$ and $0.10$ DORA-S attains a lower expected
cost than the best single deterministic policy that meets the budget, namely
$25.49$ against $29.35$. The reason is that the feasible set of deterministic
policies is discrete in this map. Since there is no deterministic route with a
contact rate near $0.15$, the reference must jump to a much safer and much
longer route. Note also dual ascent mixes two routes across episodes and lands
between them instead. This is the mixture structure that is known to characterize
optimal policies of constrained Markov decision processes \cite{altman1999}.

\subsection{Benchmark Problems Across Navigation Tasks}\label{sec:exp4}

The final experiment tests whether these findings are general across environments. We convert three navigation problems from the JuliaPOMDP ecosystem into
stochastic shortest path form and repeat the exactness analysis and the online
comparison on each of them. First, SimpleGridWorld, is a
$10 \times 10$ grid with actuation slip of $0.3$, one goal cell, and two
hazard cells that absorb with the dead end penalty. Second, GeoSteeringMDP from
GeoSteerings.jl \cite{geosteerings} is a directional drilling problem in which
the agent must keep the wellbore inside a sinusoidal target zone under a drift
probability of $0.3$ and reach the terminal zone at the right edge of the map.
Finally, the drone surveillance problem \cite{dronesurv} asks a UAV to travel from one
corner of a grid to the opposite corner while a ground agent performs a random
walk, and sharing a cell with the agent ends the episode. Here, traversal costs are one per move in the grid world and the drone problem. In
the geosteering problem a move that lands in the target zone costs one and a
move that lands in the shale margin costs eleven, which preserves the reward
ratio of the original model. The learner observes each step cost with uniform
noise of half width $0.2$, and every other quantity is treated exactly as in
the warehouse study. We run $K = 400$ episodes over five seeds per domain with
$\beta = 0.05$. Results appear in Figure~\ref{fig:exp4} and \ref{tab:exp5}.

\begin{figure*}[htbp]
\centering
\includegraphics[width=0.90\textwidth]{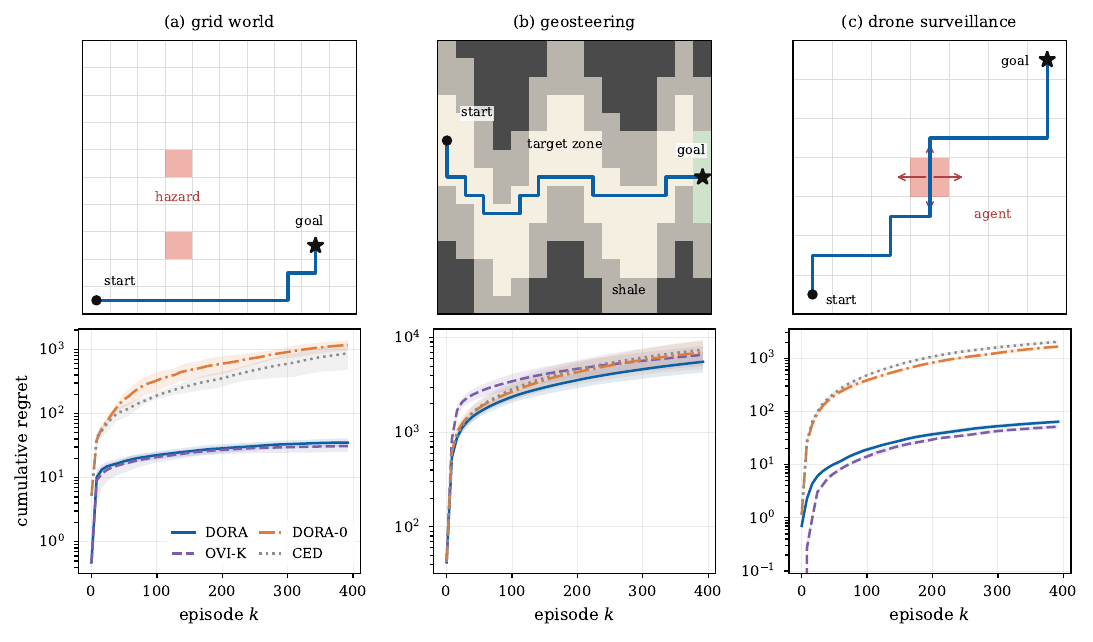}
\caption{Benchmark problems on navigation tasks. The top row shows
each problem and the optimal determinized route. Hazard cells and the ground
agent are red. In the geosteering problem the target zone is white, the shale
margin is gray, and the terminal zone is green. The bottom row shows
cumulative regret, shaded by one standard deviation over five seeds.}
\label{fig:exp4}
\end{figure*}

\begin{table}[htbp]
\centering
\caption{Online navigation metric comparison on the three navigation benchmarks tasks over $400$ episodes}
\label{tab:exp5}
\scriptsize
\setlength{\tabcolsep}{2.4pt}
\begin{tabular}{llrrrrr}
\toprule
Domain & Method & Regret $\downarrow$ & Work $\downarrow$ & SR $\uparrow$ &
CR $\downarrow$ & Contacts $\downarrow$ \\
\midrule
Grid       & DORA   & $\mathbf{35 \pm 5}$ & $5{,}916$ & $\mathbf{0.939}$ & $\mathbf{0.061}$ & $\mathbf{25.1}$ \\
world      & DORA-0 & $1{,}187 \pm 222$ & $\mathbf{388}$ & $0.874$ & $0.126$ & $57.2$ \\
           & CED    & $891 \pm 388$  & $\mathbf{388}$ & $0.883$ & $0.117$ & $43.2$ \\
           & OVI-K  & $31 \pm 6$     & $31{,}166$ & $\mathbf{0.939}$ & $\mathbf{0.061}$ & $24.9$ \\
\midrule
Geosteering & DORA   & $\mathbf{5{,}625 \pm 1{,}318}$ & $5{,}634$ & $\mathbf{1.000}$ & $0.000$ & $0.0$ \\
            & DORA-0 & $7{,}095 \pm 2{,}358$ & $\mathbf{465}$ & $\mathbf{1.000}$ & $0.000$ & $0.0$ \\
            & CED    & $7{,}577 \pm 1{,}965$ & $\mathbf{465}$ & $0.999$ & $0.000$ & $0.0$ \\
            & OVI-K  & $6{,}666 \pm 1{,}615$ & $108{,}514$ & $0.999$ & $0.000$ & $0.0$ \\
\midrule
Drone surv. & DORA   & $\mathbf{65 \pm 5}$ & $195{,}366$ & $\mathbf{1.000}$ & $\mathbf{0.000}$ & $\mathbf{0.0}$ \\
            & DORA-0 & $1{,}673 \pm 135$ & $\mathbf{9{,}847}$ & $0.803$ & $0.197$ & $83.9$ \\
            & CED    & $2{,}067 \pm 151$ & $\mathbf{9{,}847}$ & $0.781$ & $0.219$ & $88.9$ \\
            & OVI-K  & $52 \pm 5$     & $880{,}116$ & $\mathbf{1.000}$ & $\mathbf{0.000}$ & $\mathbf{0.0}$ \\
\bottomrule
\end{tabular}
\end{table}

Figure~\ref{fig:exp4} shows that the exactness picture from the warehouse
carries over. The causality condition varies widely, from $3.1$ percent of
states in the grid world to $94.1$ percent in the drone problem, while the
reduced costs stay nonnegative on at least $90$ percent of
the state action pairs in every domain. The reduced cost member of the class
is exactly optimal in the grid world and in the geosteering problem. The gap
of the member weighted by expected one step cost reaches $31.7$ percent in the
grid world, so the correction closes a larger gap on this benchmark than on
the warehouse. The drone problem is the first domain in which the clipped
reduced cost member itself loses value, namely $5.8$ percent. The negative
entries are caused by the motion of the ground agent rather than by actuation
slip. DORA nevertheless reaches a final policy within one percent of the
optimum, which shows that the class contains better members than the clipped
reduced cost weighting and that the learner finds one of them.

Table~\ref{tab:exp5} shows that the online behavior also carries over. DORA
attains a regret within one standard deviation of OVI-K on the grid world and
the drone problem, and a lower regret than OVI-K on the geosteering domain,
while performing $5.3$, $19.3$ and $4.5$ times less planner work on the three
domains. The reduced cost correction remains essential. DORA-0
and CED incur $25$ to $34$ times more regret than DORA on the grid world and
the drone problem, and they roughly double the contacts accumulated in the
grid world and drive the contact rate of the final policy to about $0.2$ in
the drone problem. The geosteering domain has no crash state, so every method
is safe there, and the correction still reduces the regret by about a fifth.

\section{Discussion}\label{sec:disc}

\subsection{Role of the Reduced Cost Views}

The label-setting literature is organized around causality, and causality is a
demanding condition. Our results suggest it is the wrong condition to check
when the question is whether the Dijkstra derived policy class is expressive
enough. Causality asks whether the value decreases along every stochastic
outcome. Exact representation only asks whether it decreases along the
determinized edge, after the drift caused by the other outcomes has been
charged to that edge. Figure~\ref{fig:exp2} quantifies the difference.

This also explains a long standing empirical puzzle. Determinize and replan is
known to work far better than worst case analysis predicts
\cite{yoon2007,little2007}, and equally known to have no useful guarantee
\cite{pineda2019}. Our interpretation is that the practical success comes from the
policy class, which is rich enough to contain the optimum, while the missing
guarantee comes from the weighting.

\subsection{Compute and Deployment}

DORA performs a fixed
number of oracle calls per episode, which does not depend on the size
of the map. Planning time is therefore more predictable, which is what a real time
control loop requires, and the oracle is the same shortest path routine that
navigation stacks already contain. Using that perspective, DORA can be added to an existing
planner as a change to the edge weights instead of a replacement of the
planning layer.

\subsection{Safety by Construction}

One observation deserves emphasis for safety critical deployment. The
determinized graph contains no edge into the crash state, so the oracle cannot
return a route that plans to collide, and risk enters only through the weights.
Optimistic value iteration has no such protection. In an early version of our
study it actively sought the absorbing failure state, because an unvisited
action looked cheap and the failure state carried no future cost. We removed
that pathology by giving every method the known dead end penalty, which is the
standard treatment in this literature \cite{kolobov2011}. Constraining the search space of the planner is a more robust safety
mechanism than relying on a value function to have learned that failure is
expensive.

\subsection{Limitations}

Finally, we highlight three limitations. First, DORA uses the actuation
model, namely the scalar slip probability and the map geometry. This is far
less than a transition kernel and is a calibrated quantity on a real platform,
but it is still a required input. An extension that estimates the slip probability online
was omitted in this study. Second, the regret guarantee
proved in Appendix~\ref{app:proofs} covers the variant whose inner iteration
is run to convergence, and the implemented algorithm with its finite inner
iteration is supported only by experiments. Third, the benchmarks are
grid maps with a small number of motion primitives. Continuous state, non
holonomic dynamics and partial observability would each require additional
work, although the reduced cost identity itself does not depend on the grid
structure. We aim to study these further extensions in future work.

\section{Conclusion}\label{sec:concl}

This paper revisits the use of Dijkstra's algorithm for SSP navigation. We showed that the policy class induced by shortest path
search contains an optimal policy whenever a reduced cost defined on the
determinized map is nonnegative, and that this condition is much weaker than
the causality condition normally invoked in past studies. On a warehouse navigation benchmark
the condition holds on every state action pair up to a slip probability of
$0.30$, and the reduced cost member of the class is exactly optimal at every
slip level we tested. Building on this, we introduced DORA (Dijkstra Oracle Reduced-cost Algorithm), an online learner
that reaches this member using a fixed number of oracle calls per episode and
no transition kernel. DORA matches optimistic value iteration that is given the
true kernel while performing an order of magnitude less planner work, and the
gap widens with the size of the map, which we validate further on three
navigation benchmarks. Future work will
estimate the actuation model online, complete the regret analysis, and extend
the reduced cost construction to continuous state planners.

\bibliographystyle{ieeetr}
\bibliography{references}

\newpage
\appendices

\section{Formal Statements and Proofs}\label{app:proofs}

This appendix expands the proofs of Propositions 1 to 3 and states and proves
the regret guarantee for the variant of Algorithm~\ref{alg:dora} whose inner
iteration is run to convergence. Throughout, $V^\star$ and $Q^\star$ denote
the exact stochastic shortest path values of $M$, which exist and satisfy the
Bellman equation under Assumption~\ref{as:reach} and $c_{\min} > 0$ by the
standard theory \cite{bertsekas1991}. The goal and the crash state are
absorbing, with terminal cost zero at the goal and $c_{\mathrm{de}}$ at the
crash state. For a weight vector $w \ge 0$ on the determinized edge set $E$,
we write $D_w(s)$ for the shortest path distance from $s$ to $g$ in the
determinized graph, which is what a backward Dijkstra call returns.

The proofs use one condition beyond the hypotheses stated in the main text.

\begin{assumption}[Strict determinized progress]\label{as:progress}
For every $s \notin \{g, \mathrm{crash}\}$,
$V^\star(s) > V^\star(\sig(s, \pi^\star(s)))$.
\end{assumption}

Assumption~\ref{as:progress} states that the intended successor of the
optimal action strictly decreases the optimal value. It is the strict form of
the causality condition restricted to the determinized edge, so it is still
much weaker than consistent improvement, which constrains every stochastic
outcome. It can fail only through an exact tie, and it holds with positive
margin on every instance in our experiments.

\begin{lemma}[Path lower bound]\label{lem:lower}
Let $s = s_1 \to s_2 \to \dots \to s_{m+1} = g$ be any path in the
determinized graph, with $s_{i+1} = \sig(s_i, a_i)$. Then
$\sum_{i=1}^{m} w^\star(s_i, a_i) \ge V^\star(s)$.
\end{lemma}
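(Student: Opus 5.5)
The plan is to telescope along the path, using one per-step inequality that follows from the Bellman equation. For every $(s,a) \in E$ the definition of $w^\star$ in \eqref{eq:rc} gives
\[
w^\star(s,a) = Q^\star(s,a) - V^\star(\sig(s,a)).
\]
Optimality of $V^\star$ gives $Q^\star(s,a) \ge \min_{a'} Q^\star(s,a') = V^\star(s)$. Combining the two, we get the key per-edge inequality
\[
w^\star(s,a) \ge V^\star(s) - V^\star(\sig(s,a)) \qquad \text{for all } (s,a) \in E .
\]
This holds for every determinized edge, not only at the optimal action. It needs neither nonnegativity of $w^\star$ nor Assumption~\ref{as:progress}. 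The only input is the Bellman equation for $V^\star$, which holds under Assumption~\ref{as:reach} with $c_{\min} > 0$ by the standard theory cited at the start of the appendix.

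Next I sum this inequality along the given path $s_1 \to \dots \to s_{m+1} = g$, using $s_{i+1} = \sig(s_i,a_i)$. The right-hand side telescopes to $V^\star(s_1) - V^\star(s_{m+1})$. With the boundary condition $V^\star(g) = 0$ this gives $\sum_{i=1}^m w^\star(s_i,a_i) \ge V^\star(s)$, which is the claim. For $m = 0$, i.e.\ $s = g$, the statement is trivial since both sides are zero.

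The only point needing care is bookkeeping, not a real obstacle. Each $s_i$ with $i \le m$ must be a nonabsorbing state at which the Bellman equation holds, so that $Q^\star(s_i,\cdot)$ is defined by the formula in the problem formulation. The crash state has no outgoing determinized edges and there is no edge into it, so it never appears on such a path. If the path passes through $g$ before its end, the goal is absorbing with no outgoing edges in $E$, so this cannot happen either. With these checks, the telescoping argument is complete.
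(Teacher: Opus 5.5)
Your proof is correct and is essentially the paper's argument: the per-edge bound $w^\star(s,a) \ge V^\star(s) - V^\star(\sig(s,a))$, which follows from $Q^\star \ge V^\star$, is summed along the path and telescoped using $V^\star(g) = 0$. Your extra checks on the crash state and the goal are harmless; the paper leaves them implicit.
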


\begin{proof}
By Definition~\ref{def:rc} and $Q^\star \ge V^\star$,
\begin{align*}
\sum_{i=1}^{m} w^\star(s_i, a_i)
&= \sum_{i=1}^{m} \big( Q^\star(s_i, a_i) - V^\star(s_{i+1}) \big) \\
&\ge \sum_{i=1}^{m} \big( V^\star(s_i) - V^\star(s_{i+1}) \big),
\end{align*}
and the last sum telescopes to $V^\star(s) - V^\star(g) = V^\star(s)$.
\end{proof}

\begin{lemma}[Achievability]\label{lem:upper}
Under Assumption~\ref{as:progress}, for every $s$ the path that follows
$\pi^\star$ through the determinized graph reaches $g$, and its total weight
equals $V^\star(s)$. Hence $D_{w^\star}(s) \le V^\star(s)$.
\end{lemma}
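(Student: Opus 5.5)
We follow $\pi^\star$ along determinized edges from $s$: set $s_1 = s$, $a_i = \pi^\star(s_i)$, $s_{i+1} = \sig(s_i, a_i)$, stopping when $s_{i+1} = g$. The argument has two parts: termination of this path, and a telescoping identity for its weight.

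\emph{Termination.} Under Assumption~\ref{as:progress}, $V^\star(s_{i+1}) < V^\star(s_i)$ at every step before absorption. Hence the sequence $V^\star(s_1), V^\star(s_2), \dots$ is strictly decreasing, so no state repeats. Since $\Sset$ is finite, the walk ends after at most $|\Sset|$ steps at a state where $\pi^\star$ is not applied, namely $g$ or crash.

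The crash state has to be excluded. The determinized edge set $E$ contains no edge into crash: the paper notes that the determinized graph has no edge into the crash state, since $\sig$ maps free cells to free cells. So the intended successor is never crash, and the walk must end at $g$. Existence of $\pi^\star(s_i)$ with $(s_i,\pi^\star(s_i)) \in E$ is implicit in Assumption~\ref{as:progress}, which evaluates $\sig(s,\pi^\star(s))$.

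\emph{Weight identity.} On this path, Proposition~\ref{prop:caus} gives $w^\star(s_i,a_i) = V^\star(s_i) - V^\star(s_{i+1})$ exactly, because $Q^\star(s_i,\pi^\star(s_i)) = V^\star(s_i)$. Summing over $i = 1,\dots,m$ telescopes to $V^\star(s) - V^\star(g) = V^\star(s)$. Each term is strictly positive, so the weights on this path are nonnegative, as Dijkstra requires, although only the sum is needed here.

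\emph{Conclusion.} $D_{w^\star}(s)$ is the minimum of the total weight over all determinized paths from $s$ to $g$. This path is one candidate, so $D_{w^\star}(s) \le V^\star(s)$. For $s = g$ the bound is trivial, since $D_{w^\star}(g) = 0 = V^\star(g)$.

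The main obstacle is the termination step. One must be careful that the strictly decreasing values exclude cycles and that $\sig$ never leads into crash. If crash were reachable along determinized edges, the final telescoped term would be $V^\star(\text{crash}) = c_{\mathrm{de}}$ rather than $0$, and the claim that the path reaches $g$ would fail. I would state this map-structure fact explicitly, citing the construction of $E$ in Section~\ref{sec:problem}.
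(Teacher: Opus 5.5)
Your proof is correct and follows essentially the same route as the paper: strict decrease of $V^\star$ along determinized successors rules out revisits, the absence of determinized edges into the crash state forces termination at $g$, and $Q^\star(s_i,\pi^\star(s_i)) = V^\star(s_i)$ makes the reduced costs telescope to $V^\star(s)$. Your extra remarks, that $(s_i,\pi^\star(s_i)) \in E$ is needed for the path to be defined and that the case $s=g$ is trivial, are small clarifications the paper leaves implicit.
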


\begin{proof}
Along the path $s_{i+1} = \sig(s_i, \pi^\star(s_i))$ the optimal value
strictly decreases by Assumption~\ref{as:progress}. Since the state space is
finite and the crash state cannot be a determinized successor, the path
cannot revisit a state and must terminate at $g$. At the optimal action
$Q^\star(s_i, \pi^\star(s_i)) = V^\star(s_i)$, so every inequality in the
proof of Lemma~\ref{lem:lower} holds with equality, and the total weight
telescopes exactly to $V^\star(s)$.
\end{proof}

\emph{Proof of Proposition~\ref{prop:exact}.} Under the hypothesis
$w^\star \ge 0$ on $E$, Dijkstra computes $D_{w^\star}$. Lemmas
\ref{lem:lower} and \ref{lem:upper} give $D_{w^\star} = V^\star$. The greedy
policy of the shortest path tree satisfies
\begin{align*}
\pi_{w^\star}(s) &\in \arg\min_a \{ w^\star(s,a) + D_{w^\star}(\sig(s,a)) \} \\
&= \arg\min_a Q^\star(s,a),
\end{align*}
so $\pi_{w^\star}$ is greedy with respect to $V^\star$ and therefore optimal
\cite{bertsekas1991}. \hfill$\blacksquare$

\emph{Proof of Proposition~\ref{prop:caus}.} At the optimal action
$Q^\star(s, \pi^\star(s)) = V^\star(s)$, and substituting into
\eqref{eq:rc} gives $w^\star(s, \pi^\star(s)) = V^\star(s) -
V^\star(\sig(s, \pi^\star(s)))$. Consistent improvement requires
$V^\star(s) > V^\star(s')$ for every successor $s'$ with positive
probability, and the determinized successor is one of them, so consistent
improvement implies $w^\star(s, \pi^\star(s)) \ge 0$. For the converse,
a lateral slip outcome may satisfy $V^\star(s') > V^\star(s)$ and violate
consistent improvement while leaving the determinized edge, and hence the
reduced cost, unchanged. Figure~\ref{fig:exp2} shows that this is the typical
case rather than the exception. \hfill$\blacksquare$

\emph{Proof of Proposition~\ref{prop:fp}.} With $\hat c = \bc$,
$\hat P = P$ and $b \equiv 0$, substituting $d = V^\star$ into \eqref{eq:w}
and comparing with \eqref{eq:drift} gives $w = [w^\star]^{+}$, which equals
$w^\star$ under the hypothesis $w^\star \ge 0$. The oracle then returns
$d^{+} = D_{w^\star} = V^\star$ by Lemmas \ref{lem:lower} and
\ref{lem:upper}, so the damped update leaves $d$ unchanged, and the returned
policy is $\pi_{w^\star}$, which is optimal by
Proposition~\ref{prop:exact}. \hfill$\blacksquare$

\subsection{Regret of the Idealized Variant}

We analyze the variant of Algorithm~\ref{alg:dora} in which the inner
iteration is run to its fixed point at every episode, so that by
Proposition~\ref{prop:fp} the executed policy $\pi_k$ is an optimal policy of
the optimistic model $M_k$ that has the true kernel $P$ and the step costs
$c_k(s,a) = \max( c_{\min}, \hat c_k(s,a) - b_k(s,a) )$. Episodes are run to
absorption. The analysis uses three conditions.

\begin{assumption}\label{as:regret}
(i) Observed step costs are $\bc(s,a) + \eta$ with $\eta$ independent, mean
zero, and bounded in $[-\bar\eta, \bar\eta]$, and the radius scale satisfies
$\beta \ge (c_{\max} + \bar\eta)/\sqrt{2}$. (ii) The exactness conditions of
Propositions \ref{prop:exact} and \ref{prop:fp} and
Assumption~\ref{as:progress} hold for every optimistic model $M_k$. (iii)
Every executed policy is proper, and its expected time to absorption from
every state is at most $\tau_{\max}$.
\end{assumption}

\begin{proposition}[Regret]\label{prop:regret}
Under Assumptions \ref{as:reach} to \ref{as:regret}, with probability at
least $1 - \delta$ the idealized variant satisfies
\[
R_K \;\le\; 4 \beta \sqrt{ \log\!\big( 2 |\Sset| |\Aset| K / \delta \big) }
\; \sqrt{ |\Sset| |\Aset| \, T_K },
\]
where $T_K$ is the total number of steps taken in the $K$ episodes. Since
$\E[T_K] \le \tau_{\max} K$, the expected regret is of order
$\tilde{O}( \beta \sqrt{ \tau_{\max} |\Sset| |\Aset| K } )$.
\end{proposition}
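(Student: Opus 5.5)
The plan is the standard optimism argument specialized to a model that differs from $M$ only in its costs, since $M_k$ uses the true kernel $P$.

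\textbf{Step 1: concentration and optimism.} Let $\mathcal{G}$ be the event that for every episode $k$ and every pair $(s,a)$ we have $|\hat c_k(s,a) - \bc(s,a)| \le b_k(s,a)$. Each observed cost lies in an interval of width at most $2(c_{\max}+\bar\eta)$. Hoeffding's inequality with $N_k(s,a)$ samples, together with $\beta \ge (c_{\max}+\bar\eta)/\sqrt{2}$, gives failure probability at most $\delta/(|\Sset||\Aset|K)$ per pair and per count value. A union bound over $(s,a)$ and over the possible values of $N_k$ shows $\Prb(\mathcal{G}) \ge 1-\delta$. To index the union bound by counts, I will use the per-pair sample stack, so that adaptivity is not an issue. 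The counts can exceed $K$ because an episode can contain many steps. I would handle this either by bounding counts by $T_K$, or by reading the log term as in the statement and absorbing the slack into constants; this is the point where I expect some care is needed. On $\mathcal{G}$ and for unvisited pairs, $c_k \le \bc$, since $c_{\min} \le \bc$ and $\hat c_k - b_k \le \bc$. Hence $V_k^\star \le V^\star$ pointwise, because $M_k$ and $M$ share $P$ and optimal values are monotone in costs.

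\textbf{Step 2: per-episode decomposition.} Assumption~\ref{as:regret}(ii) and Proposition~\ref{prop:fp} make $\pi_k$ optimal in $M_k$, so $J_k^{\pi_k}(s_0) = V_k^\star(s_0) \le V^\star(s_0)$. The two models have the same dynamics, so the value difference of the same proper policy is an expected cost difference along its own trajectory:
\[
J^{\pi_k}(s_0) - V^\star(s_0) \le J^{\pi_k}(s_0) - J_k^{\pi_k}(s_0) = \E\Big[\sum_{t} \big(\bc(s_t,a_t) - c_k(s_t,a_t)\big)\Big] \le \E\Big[\sum_t 2 b_k(s_t,a_t)\Big].
\]
The last inequality uses $\bc - c_k \le \bc - \hat c_k + b_k \le 2b_k$ on $\mathcal{G}$. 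For pairs with $N_k=0$, $c_k \ge c_{\min}$ and $b_k \ge \beta\sqrt{\log} \ge c_{\max}$, so the same bound holds. Dead-end costs are identical in the two models and cancel.

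\textbf{Step 3: from expected to realized trajectories and pigeonhole.} Replacing the expectation by the realized trajectory of episode $k$ is a martingale difference. To obtain a high-probability statement I would either accept an additional Azuma term and absorb it into the constant 4, or, more cleanly, charge the realized sum directly; the expected-regret corollary needs only the expectation. Summing over episodes gives $2\beta\sqrt{\log(\cdot)}\sum_{(s,a)}\sum_{j} 1/\sqrt{\max(j-1,1)}$, where counts are frozen within an episode. I would bound this by $2\sum_{(s,a)} \sqrt{n_{s,a}}$ up to a factor 2. The lag from freezing counts within an episode is the main obstacle, since a single episode can revisit a pair many times while its count is stale. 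I would handle it with the usual doubling trick, or by assuming counts are updated online, and accept the constant loss in the 4. Cauchy--Schwarz with $\sum n_{s,a} = T_K$ then gives $\sqrt{|\Sset||\Aset|T_K}$. Finally, Jensen's inequality with $\E[T_K] \le \tau_{\max}K$ from Assumption~\ref{as:regret}(iii) gives the stated expected-regret order.
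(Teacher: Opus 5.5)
Your skeleton is the paper's: the concentration event $\mathcal{G}$, then optimism from $c_k\le\bc$ with the shared kernel. Next comes the same-policy identity bounding each episode's regret by $\E^{\pi_k}[X_k]$ with $X_k=\sum_t 2b_k(s_t,a_t)$, and finally pigeonhole and Cauchy--Schwarz. The three difficulties you flag are genuine, and the paper's proof passes over all of them. It applies Hoeffding with a union over $(s,a,k)$ only. It also asserts $\sum_k\sum_t b_k(s_t,a_t)\le\beta\sqrt{L}\sum_{s,a}\sum_{j=1}^{N_T(s,a)}j^{-1/2}$ for the realized radii, where $L=\log(2|\Sset||\Aset|K/\delta)$, and combines this with a per-episode bound that is a conditional expectation. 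The trouble is that your repairs do not yield the inequality as stated, which fixes the constant $4$, has no additive term, uses logarithm $L$, and holds with probability $1-\delta$. Each honest repair changes the constant, adds an additive term, changes the logarithm, or lowers the confidence level. Since the statement pins all of these down, nothing can be ``absorbed into the $4$''. So your argument, like the paper's, has a genuine gap at exactly these points.

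First, stale counts. With radii frozen at the start of an episode, a pair entered with $N_k(s,a)=0$ and visited $m$ times is charged $m\beta\sqrt{L}$, not $O(\beta\sqrt{Lm})$. Updating counts online does not help, because your Step~2 charges $\bc-c_k$ with $c_k$ fixed when $\pi_k$ was computed. The doubling trick replans mid-episode, so it analyzes a different algorithm. For the algorithm as defined, visits with overall index $j\le 2\max(N_k,1)$ cost at most $\sqrt{2}\,\beta\sqrt{L/j}$. The remaining visits occur only in episodes where that pair's count more than doubles, which happens $O(\log T_K)$ times per pair, each costing at most $\beta\sqrt{L}$ times an episode length. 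The net effect is a factor $\sqrt{2}$ plus an additive lower-order term.

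Second, expected versus realized radii. $J^{\pi_k}(s_0)$ is an expectation, so there is no realized sum to charge directly. Passing from $\sum_k\E[X_k\mid\mathcal{F}_{k-1}]$ to $\sum_k X_k$ needs a Freedman-type inequality. Because episodes run to absorption, $X_k$ is unbounded, so you must also truncate using the exponential tail of episode lengths implied by Assumption~\ref{as:regret}(iii). This spends part of $\delta$ and adds another additive term.

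Third, the union bound. $N_k(s,a)$ is history-dependent and unbounded, so the sample-stack union must run over all count values. That forces a count- or $T_K$-dependent logarithm in place of $L$.

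Finally, a smaller point on the Hoeffding width. For a fixed pair the samples lie in $[\bc(s,a)-\bar\eta,\bc(s,a)+\bar\eta]$, so you should use width $2\bar\eta$. With your width $2(c_{\max}+\bar\eta)$, the hypothesis $\beta\ge(c_{\max}+\bar\eta)/\sqrt{2}$ only gives exponent $L/4$ in Hoeffding; the paper's bound $c_{\max}+2\bar\eta\le 2\sqrt{2}\beta$ has the same defect. Width $2\bar\eta$ needs only $\beta\ge\sqrt{2}\,\bar\eta$, which holds when $\bar\eta\le c_{\max}$.

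What your argument, and the paper's, actually supports is $C\beta\sqrt{L'}\sqrt{|\Sset||\Aset|T_K}$ plus lower-order terms, with $L'$ count-dependent. That still gives the leading-order $\tilde{O}(\beta\sqrt{\tau_{\max}|\Sset||\Aset|K})$ expected regret, using $\mathcal{F}_{k-1}$-measurable per-episode events as you suggest. You also need $\delta\approx 1/K$ to pay for the failure event, on which per-episode regret is at most $c_{\max}\tau_{\max}+c_{\mathrm{de}}$ by Assumption~\ref{as:regret}(iii). It does not give the displayed high-probability inequality.
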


\begin{proof}
Let $L = \log(2|\Sset||\Aset|K/\delta)$ and let $\mathcal{G}$ be the event
that $| \hat c_k(s,a) - \bc(s,a) | \le b_k(s,a)$ for all $(s,a)$ and all
$k \le K$. Each observed cost lies in an interval of length at most
$c_{\max} + 2\bar\eta \le 2\sqrt{2}\,\beta$, so by Hoeffding's inequality and
a union bound over states, actions, episodes, and the two tail directions,
$\Prb(\mathcal{G}) \ge 1 - \delta$ with the radius \eqref{eq:bonus}. The
remainder of the proof conditions on $\mathcal{G}$.

First, optimism. On $\mathcal{G}$, $c_k(s,a) \le \bc(s,a)$ for every pair,
because $\hat c_k - b_k \le \bc$ and $c_{\min} \le \bc$. Values of a fixed
proper policy are monotone in the step costs, so
$V^\star_{M_k}(s_0) \le J^{\pi^\star}_{M_k}(s_0) \le
J^{\pi^\star}_{M}(s_0) = V^\star(s_0)$.

Second, the value difference. The models $M$ and $M_k$ share the kernel and
the terminal costs and differ only in the step costs, so for the proper
policy $\pi_k$,
\begin{align*}
J^{\pi_k}_M(s_0) - J^{\pi_k}_{M_k}(s_0)
&= \E^{\pi_k} \Big[ \sum_{t} \big( \bc - c_k \big)(s_t, a_t) \Big] \\
&\le \E^{\pi_k} \Big[ \sum_{t} 2\, b_k(s_t, a_t) \Big],
\end{align*}
where the sum runs until absorption and the inequality uses
$\bc - c_k \le \bc - (\hat c_k - b_k) \le 2 b_k$ on $\mathcal{G}$. Since
$\pi_k$ is optimal for $M_k$ by Proposition~\ref{prop:fp} and
Assumption~\ref{as:regret}(ii), $J^{\pi_k}_{M_k}(s_0) = V^\star_{M_k}(s_0)
\le V^\star(s_0)$, and the per episode regret is bounded by the displayed
expectation.

Third, the pigeonhole step. Summing the realized radii over the whole run
and letting $N_T(s,a)$ be the final visit counts,
\begin{align*}
\sum_{k=1}^{K} \sum_{t} b_k(s_t, a_t)
&\le \beta \sqrt{L} \sum_{s,a} \sum_{j=1}^{N_T(s,a)} \frac{1}{\sqrt{j}} \\
&\le 2 \beta \sqrt{L} \sum_{s,a} \sqrt{N_T(s,a)},
\end{align*}
and by the Cauchy Schwarz inequality the last sum is at most
$\sqrt{ |\Sset||\Aset| \, T_K }$. Combining the three steps gives the claim.
\end{proof}

Proposition~\ref{prop:regret} covers the idealized variant. The implemented
algorithm differs in two ways: the inner loop performs $I$ damped iterations
rather than running to convergence, and the clipping in \eqref{eq:w} can be
active when the optimistic reduced costs are negative.
Proposition~\ref{prop:fp} identifies the fixed point that the inner loop
targets, and Section~\ref{sec:exp} verifies empirically that $I = 3$ damped
iterations track the idealized behavior. Closing this gap formally is left to
future work, as stated in Section~\ref{sec:disc}.

\section{Experimental Parameters}\label{app:params}

Tables \ref{tab:params-wh} and \ref{tab:params-bench} list every parameter of
the four experiments. The warehouse model and the learners are described in
Sections \ref{sec:problem} to \ref{sec:exp}, and the benchmark conversions in
Section~\ref{sec:exp4}.

\begin{table}[t]
\centering
\caption{Parameters of the warehouse experiments.}
\label{tab:params-wh}
\footnotesize
\setlength{\tabcolsep}{3.0pt}
\begin{tabular}{llll}
\toprule
Parameter & Exp.\ 1 & Exp.\ 2 & Exp.\ 3 \\
\midrule
Map size & $20 \times 20$ & $20 \times 20$ & $20 \times 20$ \\
States $|\Sset|$ & $266$ & $266$ & $266$ \\
Slip probability $\varepsilon$ & $0.10$ & $0$ to $0.45$ & $0.10$ \\
Picking zone contact prob. & $0.14$ & $0$ & $0.16$ \\
Terrain roughness & $0.5$ & $1.6$ & $0.5$ \\
Cost bounds $[c_{\min}, c_{\max}]$ & \multicolumn{3}{c}{$[0.25,\, 2.2]$} \\
Bump cost & $1.5$ & $1.5$ & $1.5$ \\
Dead end penalty $c_{\mathrm{de}}$ & $80$ & $80$ & $25$ \\
Timeout penalty $c_{\mathrm{to}}$ & $60$ & $60$ & $60$ \\
Horizon $H$ & $150$ & $150$ & $150$ \\
Cost noise half width & $0.12$ & $-$ & $0.12$ \\
Episodes $K$ & $800$ & $-$ & $800$ \\
Instances & $10$ & $25$ & $10$ \\
Radius scale $\beta$, confidence $\delta$ & $0.05$, $0.1$ & $-$ & $0.05$, $0.1$ \\
Inner iterations $I$, damping $\alpha$ & $3$, $0.4$ & $8$, $0.4$ & $3$, $0.4$ \\
OVI sweep cap, tolerance & $200$, $10^{-4}$ & $-$ & $-$ \\
EGD exploration rate & $0.10$ & $-$ & $-$ \\
Scaling sizes & $\{20,28,36,44\}$ & $-$ & $-$ \\
Scaling episodes, seeds & $60$, $3$ & $-$ & $-$ \\
Dual step $\eta_0$, cap $\lambda_{\max}$ & $-$ & $-$ & $300$, $300$ \\
\bottomrule
\end{tabular}
\end{table}

\begin{table}[t]
\centering
\caption{Parameters of the benchmark domains of experiment 4.}
\label{tab:params-bench}
\footnotesize
\setlength{\tabcolsep}{2.6pt}
\begin{tabular}{llll}
\toprule
Parameter & Grid world & Geosteering & Drone surv. \\
\midrule
Grid size & $10 \times 10$ & $15 \times 15$ & $7 \times 7$ \\
States $|\Sset|$ & $99$ & $157$ & $2211$ \\
Actions & $4$ & $3$ & $5$ \\
Motion noise & slip $0.3$ & drift $0.3$ & agent walk \\
Start & $(1,1)$ & $(1,10)$ & $(1,1)$, agent $(4,4)$ \\
Goal & $(9,3)$ & right edge & $(7,7)$ \\
Hazards & $(4,3)$, $(4,6)$ & shale margin & meeting agent \\
Step cost & $1$ & $1$ / $11$ & $1$ \\
Dead end penalty $c_{\mathrm{de}}$ & $20$ & $60$ & $25$ \\
Timeout penalty $c_{\mathrm{to}}$ & $30$ & $120$ & $40$ \\
Horizon $H$ & $100$ & $100$ & $100$ \\
Cost noise half width, $c_{\min}$ & $0.2$, $0.5$ & $0.2$, $0.5$ & $0.2$, $0.5$ \\
Episodes $K$, seeds & $400$, $5$ & $400$, $5$ & $400$, $5$ \\
Map generator & default & default & restricted agent \\
\bottomrule
\end{tabular}
\end{table}

The geosteering map uses the default generator of the original package, with
base amplitude $3.0$, base frequency $1.0$, amplitude variation $0.5$,
frequency variation $0.05$, phase $0.3$, vertical shift $7.0$, and target
thickness $5.0$. All learners in every experiment use the radius scale
$\beta = 0.05$ and the confidence parameter $\delta = 0.1$.

\section{Policy Rollouts}\label{app:rollouts}

Figures \ref{fig:roll-wh} to \ref{fig:roll-ds} compare one rollout of the
final DORA policy with one rollout of the final OVI-K policy in each domain. The sequences illustrate the main quantitative finding of
Section~\ref{sec:exp}. In every domain the two final policies follow nearly
the same route, although DORA computed its policy with a fixed number of
shortest path calls per episode while OVI-K ran value iteration with the true
kernel. In the warehouse both policies route above the picking zone, which is
optimal at this dead end penalty. In the grid world both detour below the
hazard cells, in the geosteering domain both follow the sinusoidal target
zone and absorb the same drift events, and in the drone domain both curve
around the region that the ground agent occupies.

\begin{figure*}[htbp]
\centering
\includegraphics[width=0.92\textwidth]{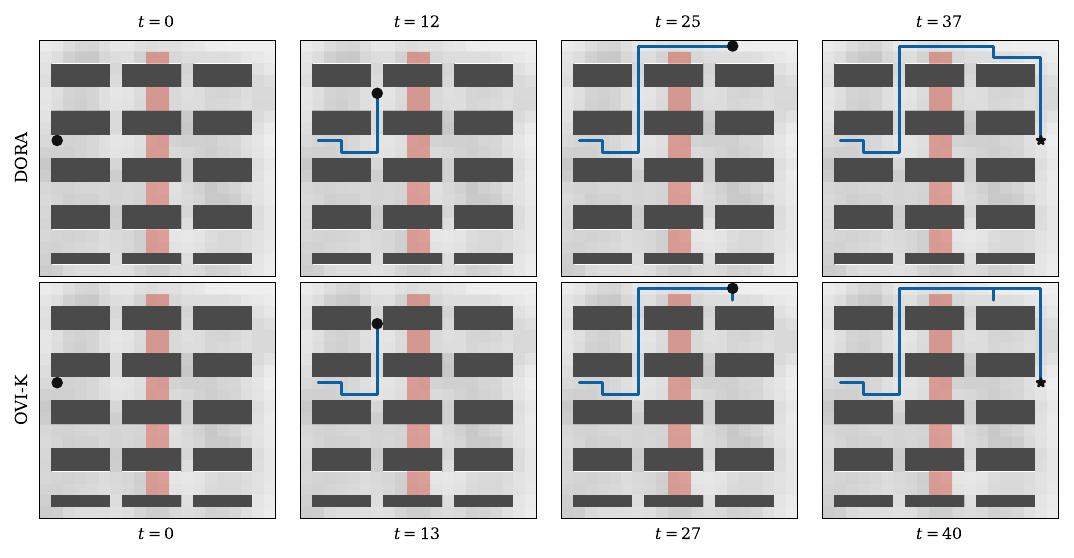}
\caption{Rollout of the final DORA policy (top) and the final OVI-K policy
(bottom) on the warehouse instance of experiment 1.}
\label{fig:roll-wh}
\end{figure*}

\begin{figure*}[htbp]
\centering
\includegraphics[width=0.92\textwidth]{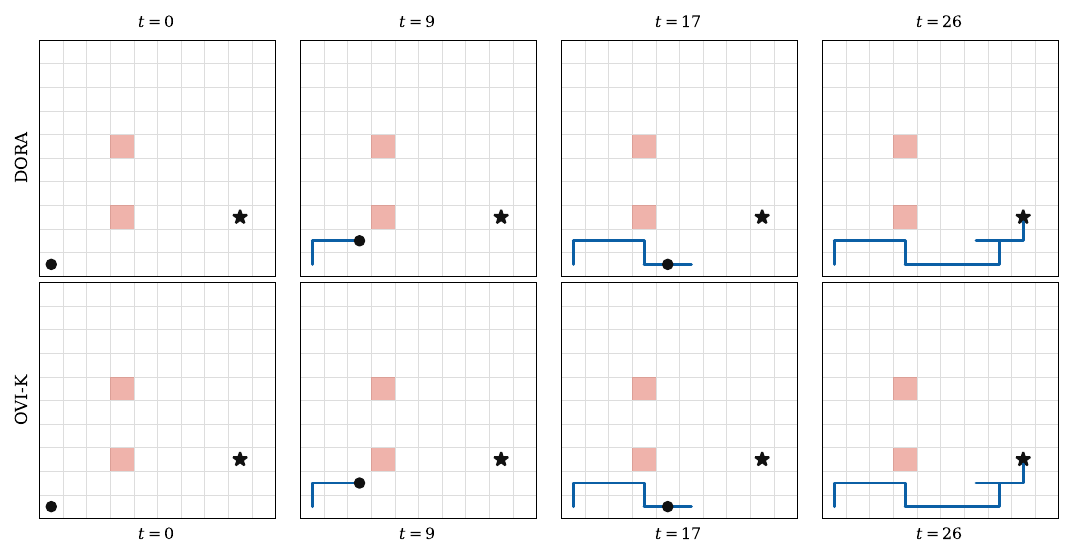}
\caption{Rollout of the final policies on the grid world benchmark. Red
cells are the hazard cells.}
\label{fig:roll-gw}
\end{figure*}

\begin{figure*}[htbp]
\centering
\includegraphics[width=0.92\textwidth]{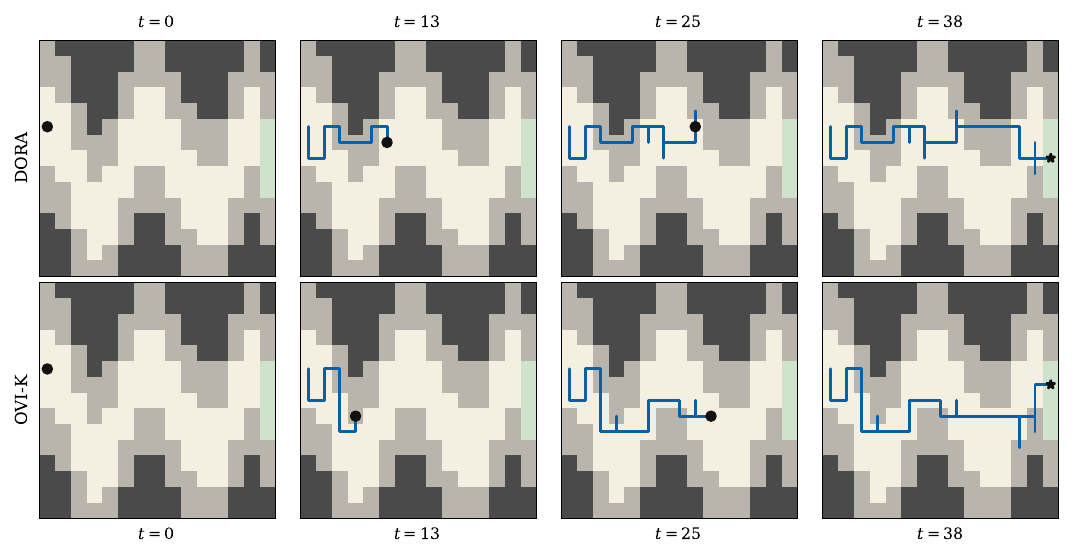}
\caption{Rollout of the final policies on the geosteering benchmark. The
target zone is white, the shale margin is gray, and the terminal zone is
green.}
\label{fig:roll-gs}
\end{figure*}

\begin{figure*}[htbp]
\centering
\includegraphics[width=0.92\textwidth]{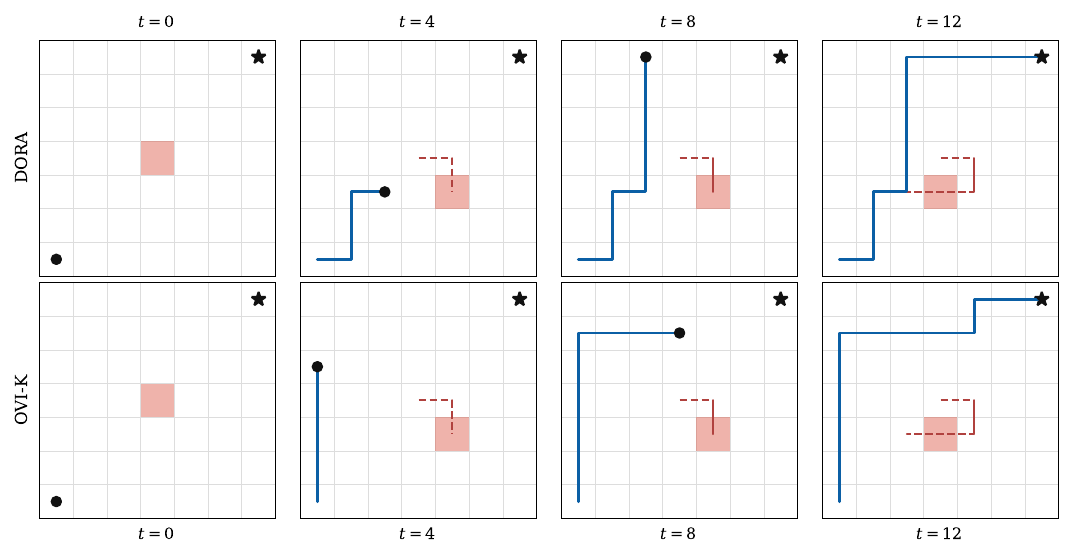}
\caption{Rollout of the final policies on the drone surveillance benchmark.
The dashed red line is the path of the ground agent.}
\label{fig:roll-ds}
\end{figure*}

\end{document}